\documentclass{article}

\PassOptionsToPackage{numbers, compress, sort}{natbib}
\usepackage[sort]{natbib}

\usepackage[preprint]{neurips_2026}

\usepackage[utf8]{inputenc} 
\usepackage[T1]{fontenc}    
\usepackage{url}            
\usepackage{booktabs}       
\usepackage{amsfonts}       
\usepackage{amsthm}
\usepackage{nicefrac}       
\usepackage{microtype}      
\usepackage{xcolor}         
\usepackage{algorithm}
\usepackage{algorithmic}
\usepackage{array}
\usepackage{bm}
\usepackage{graphicx}
\usepackage{multirow}
\usepackage{makecell}
\usepackage{caption}
\usepackage{adjustbox}
\usepackage{wrapfig}
\usepackage{float}
\usepackage{pifont}
\usepackage{xcolor}         
\usepackage{color, colortbl}
\definecolor{citecolor}{HTML}{2980b9}
\definecolor{linkcolor}{HTML}{c0392b}
\definecolor{darkorange}{HTML}{FF8C00}
\definecolor{chocolate}{HTML}{D2691E}
\definecolor{darkgreen}{HTML}{006400}
\definecolor{darkblue}{HTML}{00008B}
\definecolor{mediumblue}{HTML}{0000CD}
\definecolor{dodgerblue}{HTML}{1E90FF}
\definecolor{royalblue}{HTML}{4169E1}
\definecolor{shadecolor}{RGB}{237,237,237}
\definecolor{backred}{RGB}{255, 190, 190}
\definecolor{backblue}{RGB}{210, 230, 250}

\newcommand\methodname{SLIM}

\theoremstyle{plain}
\newtheorem{theorem}{Theorem}[section]

\newtheorem{lemma}[theorem]{Lemma}

\theoremstyle{definition}

\newtheorem{assumption}[theorem]{Assumption}
\theoremstyle{remark}

\usepackage{stfloats}
\usepackage{extarrows}
\usepackage[most]{tcolorbox}
\tcbset{
  aibox/.style={
    width=\textwidth,
    top=10pt,
    colback=white,
    colframe=black,
    colbacktitle=black,
    enhanced,
    center,
    attach boxed title to top left={yshift=-0.1in,xshift=0.15in},
    boxed title style={boxrule=0pt,colframe=white,},
  }
}
\newtcolorbox{AIbox}[2][]{aibox,title=#2,#1}

\usepackage{mmstyle}
\usepackage[hidelinks,breaklinks=true,colorlinks,bookmarks=false,citecolor=citecolor,linkcolor=linkcolor]{hyperref}
\title{Dynamic Skill Lifecycle Management for \\Agentic Reinforcement Learning}

\author{%
    Junhao Shen$^{1*}$\quad Teng Zhang$^{2*}$\quad Xiaoyan Zhao$^{1\dag}$\quad  Hong Cheng$^{1}$ \\
    \small{$^1$ Database Group, The Chinese University of Hong Kong} $\quad$
    \small{$^2$ Lanzhou University} \\
    \small\texttt{\{shen.junhao,TengZhangLZU\}@outlook.com} \quad 
    \small\texttt{\{xzhao,hcheng\}@se.cuhk.edu.hk}
}

\begin{document}

\maketitle
\renewcommand\thefootnote{}
\footnotetext{$\dagger$: Corresponding author. $*$: Equal contribution.}

\begin{abstract}
Large language model agents increasingly rely on external skills to solve complex tasks, where skills act as modular units that extend their capabilities 
beyond what parametric memory alone supports.
Existing methods assume external skills either accumulate as persistent guidance or internalized into the policy, eventually leading to zero-skill inference. We argue this assumption is overly restrictive, since with limited parametric capacity and uneven marginal contribution across skills, the optimal active skill set is non-monotonic, task- and stage-dependent.
In this work, we propose \methodname{}, a framework of dynamic \textbf{S}kill \textbf{LI}fecycle \textbf{M}anagement for agentic reinforcement learning (RL), which treats the active external skill set as a dynamic optimization variable jointly updated with policy learning.
Specifically, \methodname{} estimates each active skill’s marginal external contribution through leave-one-skill-out validation, then applies three lifecycle operations: retaining high-value skills, retiring skills whose contribution becomes negligible after sufficient exposure, and expanding the skill bank when persistent failures reveal missing capability coverage.
Experiments show that \methodname{} outperforms the best baselines by an average of 7.1\% points across ALFWorld and SearchQA. 
Results further indicate that policy learning and external skill retention are not mutually exclusive: some skills are absorbed into the policy, while others continue to provide external value, supporting \methodname{} as a more general paradigm for skill-based agentic RL.
Code is available at \url{https://github.com/ejhshen/SLIM}.
\end{abstract}

\begin{figure}[h]
    \centering
    \includegraphics[width=0.90\linewidth]{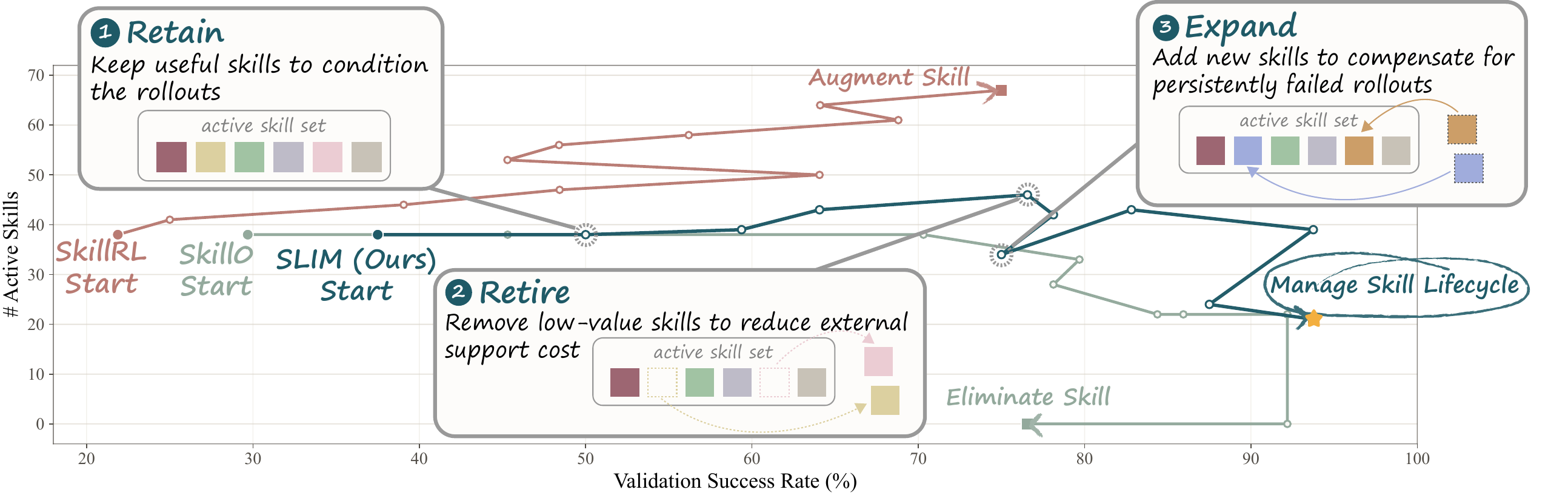}
    \caption{The reinforcement learning dynamics on ALFWorld. We plot validation success rate against the number of skills in active set during training. SkillRL accumulates external skills, whereas Skill0 progressively eliminates them. \methodname{} instead performs retain--retire--expand lifecycle management, converging to a non-empty skill set with higher validation success. This suggests that the effective endpoint is a learned external skill boundary rather than full accumulation or forced elimination.}
    \vspace{-16pt}
    \label{fig:slim_hero}
\end{figure}
\section{Introduction}
Large language model (LLM) agents~\cite{luo2025llmagent_survey,wang2024llm_autonomous_agent_survey} are increasingly used to solve complex tasks that require multi-step reasoning~\cite{plaat2026multistep}, long-horizon planning~\cite{huang2024understanding}, and reliable tool use~\cite{shen2024llm}. A growing way to improve these agents is to equip them with external skills~\cite{chen2026skvm,zheng2025skillweaver,ni2026trace2skill}, where each skill is a modular procedural artifact inserted at inference time to provide reusable task-solving guidance.~\cite{zhou2026externalization}. By conditioning the agent on such external procedural knowledge, skill-based agents can extend capabilities beyond what the base model can reliably express from its parameters alone~\cite{guo2025srsa,wang2026webxskill,yim2026asda}.
Despite this progress, existing skill-based agentic RL methods largely follow two monotonic paradigms. One paradigm treats skills as persistent augmentation and continuously expands the external skill bank to support exploration and decision-making ~\cite{xia2026skillrl}. The other treats skills as temporary scaffolds and gradually removes them toward zero-skill inference, aiming to transfer their benefits into model parameters~\cite{lu2026skill0}. While effective in their respective settings, both approaches  implicitly assume that  the active external skill set should either keep growing or eventually disappear. This assumption overlooks a more general question: \emph{As the agent learns, how should its active 
external skill set evolve under limited parametric capacity and uneven marginal contributions across skills?}

This question is especially important because parametric storage in language models is finite and constrained by model size, training budget, and the trade-off between memorization and generalization~\cite{Zhu20224physics31,Zhu20224physics32,Zhu20224physics33}. 
As a result, not every useful capability should be forced into model parameters.
External skills are particularly suitable for preserving narrow, low-frequency, or long-tail procedures that may be costly or unnecessary to encode parametrically~\cite{zhang2026experience}. 
At the same time, keeping too many skills active is not free since large skill banks can introduce routing noise, and long injected contexts may reduce the reliability of skill use~\cite{zheng2026skillrouter,liu2024lost}.
Therefore, the central problem is not whether skills should be accumulated or eliminated, but how to determine the external boundary of a learning agent.
A skill should be retained when it still provides marginal external value, retired when its contribution becomes negligible, and expanded when persistent failures reveal missing capability coverage.

To address this problem, we propose \methodname{}, a framework for dynamic \textbf{S}kill \textbf{LI}fecycle \textbf{M}anagement in agentic reinforcement learning (RL). \methodname{} treats the active external skill set itself as a dynamic optimization variable during training. Specifically, \methodname{} maintains a task-conditioned active skill set during RL, retrieves hierarchical skills from the current active pool, estimates the marginal external contribution of each active skill through leave-one-skill-out validation, and couples these signals with RL-based policy optimization to retain, retire, or expand the active skill set over training. 
This creates a practical management mechanism between model parameters and external modular skills: reusable capabilities can be absorbed by the policy when external support becomes unnecessary, while narrow or long-tail capabilities can remain external when they continue to provide value. As shown in Figure~\ref{fig:slim_hero}, \methodname{} yields a non-monotonic external capability trajectory rather than forcing full accumulation or zero-skill inference.

We evaluate \methodname{} on two representative skill-based agentic RL benchmarks, ALFWorld and SearchQA, and compare it against the standard GRPO method~\cite{guo2025deepseek} as well as representative skill augmentation and skill internalization methods, including SkillRL~\cite{xia2026skillrl} and Skill0~\cite{lu2026skill0}. Extensive experiments show that \methodname{} achieves the strongest overall performance, outperforming the best baselines by an average of 7.1\% points across ALFWorld and SearchQA. The training dynamics and lifecycle analysis further reveal a qualitatively different endpoint from prior methods, where the best performance generally converges to neither persistent full augmentation nor zero-skill inference.

Our contributions are threefold.
(i) We formulate skill-based agentic RL as a dynamic skill lifecycle management problem, where the active external skill set is not assumed to monotonically grow or vanish, but is treated as a trainable external capability boundary.
(ii) We propose \methodname{}, which estimates marginal external contribution through leave-one-skill-out validation and uses it to retain, retire, or expand skills during RL training.
(iii) Experiments on two widely used benchmarks show that \methodname{} improves task performance while converging to a compact non-empty active skill set, showing a learned boundary between internalized capabilities and external skills.
\section{Related Work}
\noindent\textbf{Large Language Model Agents.}
Large language model (LLM) agents turn autoregressive models into sequential decision makers that plan, act, and interact with external environments through tools, APIs, and embodied interfaces~\cite{yao2023react,xie2024openagents,Schick2023toolformer}. Progress in tool use~\cite{patil2024gorilla,qin2024toolllm,hao2023toolkengpt}, web navigation~\cite{he2024webvotager,openai2025deepresearch,google2025geminideepresearch}, computer use~\cite{openai2025computer,anthropic2025computer}, and long-horizon task completion~\cite{shen2026achieving,chen2025internvla,Jin2025SearchR1TL} shows that structured action spaces and external scaffolding are crucial for reliable agent behavior. External memory~\cite{xu2025mem,mem0} and skill support~\cite{chen2026skvm,ni2026trace2skill,wang2026webxskill,zheng2025skillweaver} further improve robustness and compositionality. Our work follows this line but focuses on how the active external skill set should evolve during RL training.

\noindent\textbf{Agentic Reinforcement Learning.}
Reinforcement learning has become a key paradigm for post-training LLM agents~\cite{xu2025towards,lin2025comprehensive}, especially when interaction, exploration, and delayed credit assignment are required~\cite{wei2025reinforcing,goldie2025synthetic,zhao2025r}. Recent methods combine policy optimization with structured rewards, preference signals, or group-relative objectives to improve reasoning and action quality~\cite{guo2025deepseek,shen2026sophia,schulman2017proximal,fujimoto2019off,yu2024dapo}. These advances provide a strong optimization backbone, but they do not determine how external skills should be retained, removed, or expanded during training. \methodname{} keeps the RL optimizer fixed and studies this external capability-management problem.

\noindent\textbf{Skill-Based Agents.}
Skill is a long-standing mechanism for organizing reusable agent behavior~\cite{zhou2026externalization,zhang2026experience}. Recent LLM-agent work instantiates this idea through external skill banks~\cite{wang2023voyager,wang2024agentworkflowmemory,zheng2025skillweaver,wang2026webxskill,xia2026skillrl}, reusable prompt modules~\cite{fu2024autoguide,liu2024skillact,li2025chatsop,ye2025sopagent}, and distilled procedural guidance~\cite{ni2026trace2skill,chen2026skvm,mi2026procmem}. Closely related methods either keep skills as persistent augmentation~\cite{xia2026skillrl}, eliminate them toward zero-skill inference~\cite{lu2026skill0}, or co-evolve decision and skill-bank agents from rollouts~\cite{wu2026cosplay}. \methodname{} is complementary to these directions, \ie, it treats the active external skill set during RL as a dynamic variable and decides when skills should be retained, retired, or expanded under finite model capacity.

\section{Preliminaries}
\label{sec:preliminaries}
\noindent\textbf{LLM Agent.}
We model an LLM agent as a policy $\pi_{\theta}$ that interacts with an environment over sequential decisions. Given a task instance $x \sim \mathcal{X}$, the agent produces a trajectory $\tau=(o_1,a_1,\ldots,o_T,a_T)$, where $o_t$ and $a_t$ are the observation and action at step $t$, and $T$ is the horizon. The policy $\pi_{\theta}(a_t \mid h_t)$, parameterized by $\theta$, conditions on the history $h_t=(x,o_1,a_1,\ldots,o_t)$. In text-only environments, both $o_t$ and $a_t$ are token sequences, and $\pi_{\theta}$ is a causal language model that autoregressively generates the next action from $h_t$.

\noindent\textbf{Group Relative Policy Optimization.}
We use Group Relative Policy Optimization (GRPO)~\cite{guo2025deepseek} as the RL optimizer. For each task $x$, GRPO samples $G$ trajectories $\{\tau^{(g)}\}_{g=1}^{G}$ from the behavior policy $\pi_{\theta_{\text{old}}}$ and assigns each a scalar reward $R(\tau^{(g)})$. Let $\mathbf{r}=\{R(\tau^{(1)}),\ldots,R(\tau^{(G)})\}$. The group-relative normalized advantage is
\(
\hat{A}^{(g)}=\frac{R(\tau^{(g)})-\mathrm{mean}(\mathbf{r})}{\mathrm{std}(\mathbf{r})}
\).
Since rewards are outcome-level, the same $\hat{A}^{(g)}$ is used for all action-generation steps in $\tau^{(g)}$. Let $T^{(g)}$ be the number of action steps and $\rho_t^{(g)}(\theta)=\frac{\pi_{\theta}(a_t^{(g)} \mid h_t^{(g)})}{\pi_{\theta_{\text{old}}}(a_t^{(g)} \mid h_t^{(g)})}$ be the step-wise policy ratio. The GRPO objective is
\begin{align}
\resizebox{0.9\linewidth}{!}{$
J_{\text{GRPO}}(\theta)
=
\operatorname*{\mathbb{E}}\limits_{\substack{x \sim \mathcal{X},\\ \{\tau^{(g)}\}_{g=1}^{G} \sim \pi_{\theta_{\text{old}}}(\cdot \mid x)}}
\Bigg[
\frac{1}{G}
\sum\limits_{g=1}^{G}
\frac{1}{T^{(g)}}
\sum\limits_{t=1}^{T^{(g)}}
\Big(
\min
\big(
\rho_t^{(g)}(\theta)\hat{A}^{(g)},
\mathrm{clip}\!\left(\rho_t^{(g)}(\theta),1-\epsilon,1+\epsilon\right)\hat{A}^{(g)}
\big)
- \beta \, \mathbb{D}_{\mathrm{KL}}\!\left[\pi_{\theta} \,\|\, \pi_{\mathrm{ref}}\right](h_t^{(g)})
\Big)
\Bigg],
$}
\label{eq:grpo_obj}
\end{align}
where $\epsilon$ and $\beta$ are hyper-parameters and $\pi_{\mathrm{ref}}$ is the reference policy.

\noindent\textbf{Skill Bank and Problem Setting.}
Following SkillRL~\cite{xia2026skillrl}, we assume a hierarchical external skill library with \emph{general skills} and \emph{task-specific skills}. Let $\mathcal{S}$ denote the global skill bank, with general-skill pool $\mathcal{S}^{\text{gen}}$ and task-specific pool $\mathcal{S}^{k}$ for task type $k$. At audit step $t$, the agent only accesses an active subset $\mathcal{A}_t \subseteq \mathcal{S}$ and acts under a skill-conditioned policy $\pi_{\theta}(a_t \mid h_t,s)$, where $s$ denotes the selected external skill.
We use the following formulation to describe the allocation problem that motivates \methodname{}. We use $\mathcal{A}$, $\mathcal{I}$, and $\mathcal{U}=\mathcal{S}\setminus(\mathcal{A}\cup\mathcal{I})$ to denote the active external set, the latent internalized set, and the inactive external set, respectively. Let $m(s)\ge 0$ be the effective parametric memory cost of internalizing skill $s$, and let $\mathcal{C}_{\theta}$ denote the finite knowledge capacity of the model~\cite{Zhu20224physics33}. The external support cost is modeled as a conceptual black-box monotone set function $\Omega:2^{\mathcal{S}}\to\mathbb{R}_{\ge 0}$, where adding any inactive skill incurs positive marginal cost, \ie, $\Omega(\mathcal{A}\cup\{s\})-\Omega(\mathcal{A})>0$ for $s\notin\mathcal{A}$. This formulation motivates training as the following capacity-constrained allocation problem:
\begin{align}
\max_{\theta,\mathcal{A},\mathcal{I}}
\quad
\mathbb{E}_{x\sim\mathcal{X}}
\big[
\operatorname{Perf}(x;\pi_\theta,\mathcal{A})
\big]
- \Omega(\mathcal{A})
\quad
\text{s.t.}\quad
\sum_{s\in \mathcal{I}} m(s)
\le
\mathcal{C}_{\theta},
\ \mathcal{A}\cap\mathcal{I}=\varnothing,
\label{eq:slim_problem}
\end{align}
The monotonicity of $\Omega$ captures the fact that extra active skills increase context or routing overhead, while the finite-capacity constraint prevents assuming that all skills can be absorbed into parameters. Skills removed from $\mathcal{A}$ may move into $\mathcal{I}$ if they are internalized or $\mathcal{U}$ if they are noisy or obsolete.
\begin{figure}[t]
    \centering
    \includegraphics[width=0.95\linewidth]{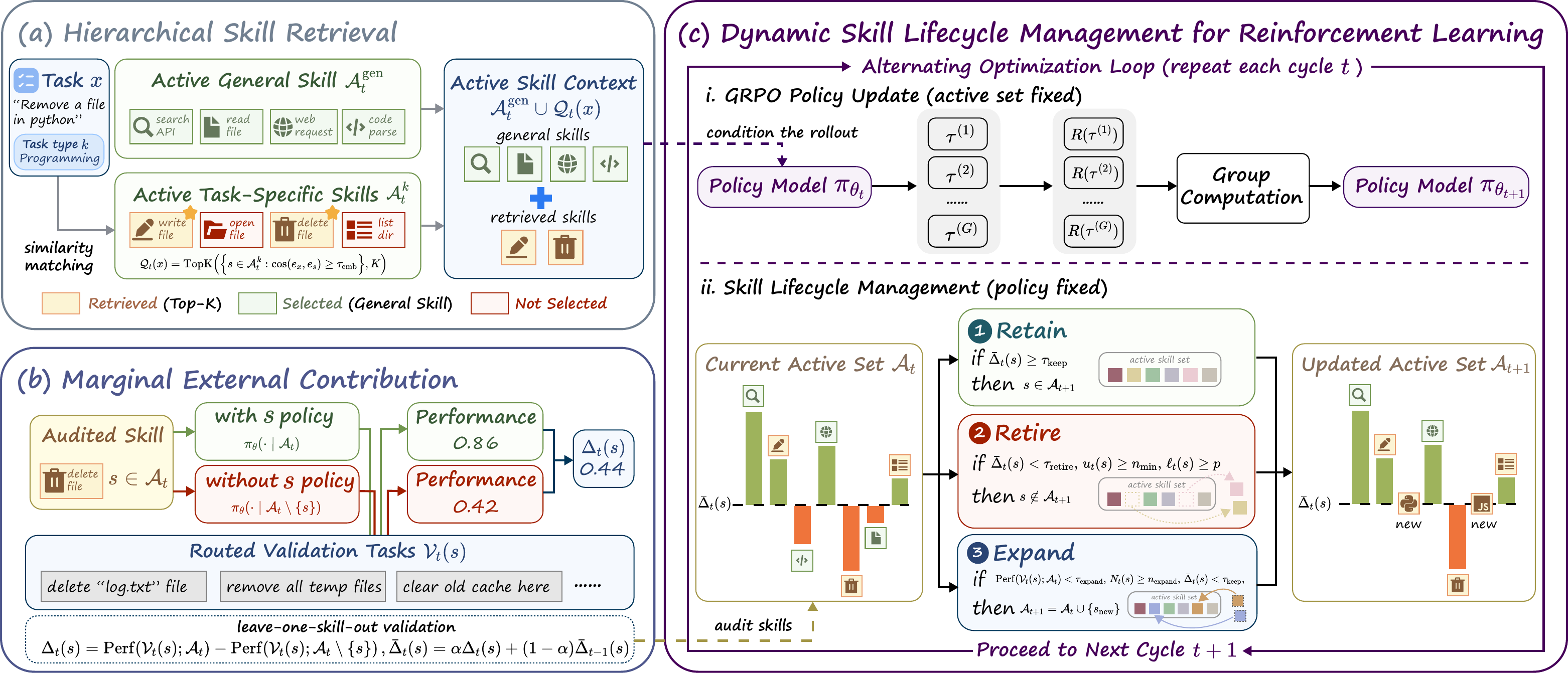}
    \caption{An overview of \methodname{}. Motivated by Eq.~\eqref{eq:slim_problem}, \methodname{} first retrieves task-conditioned visible skills, then estimates skill-level marginal contribution via leave-one-skill-out validation, and finally updates the policy and skill lifecycle through GRPO-based retain--retire--expand operations.}
    \vspace{-10pt}
    \label{fig:overview}
\end{figure}

\section{Method: \methodname{}}
An overview of \methodname{} is shown in Figure~\ref{fig:overview}.
Eq.~\eqref{eq:slim_problem} motivates a capacity-constrained allocation view over the policy and the active external skill set, but exact online optimization over this mixed space is intractable. \methodname{} therefore uses three tractable approximations.
First (Section~\ref{sec:skill_retrieval}), it restricts the active-set search to a task-conditioned set of visible skills.
Next (Section~\ref{sec:marginal_contribution}), it estimates the local value of each audited skill through leave-one-skill-out validation.
Finally (Section~\ref{sec:lifecycle_rl}), it combines these signals with GRPO-based policy optimization, enabling the active skill set to be retained, retired, or expanded as training proceeds. In this way, \methodname{} learns which capabilities should remain active and which should be removed from active external support.

\subsection{Hierarchical Skill Retrieval}
\label{sec:skill_retrieval}
The first component of \methodname{} reduces the active-set search space in Eq.~\eqref{eq:slim_problem}. Directly selecting from the full skill bank is a combinatorial problem, so \methodname{} uses the hierarchical setup in Section~\ref{sec:preliminaries} to convert global skill selection into task-conditioned candidate selection.

Formally, let $\mathcal{A}^{\text{gen}}_t \subseteq \mathcal{S}^{\text{gen}}$ denote the currently active general-skill pool at audit step $t$, and let $\mathcal{A}^{k}_t \subseteq \mathcal{S}^{k}$ denote the active task-specific pool for task type $k$. For a task instance $x$ of type $k$, \methodname{} selects the active general skills together with a retrieved task-specific subset from $\mathcal{A}^{k}_t$. Let $e_x$ denote the embedding of the current task description and let $e_s$ denote the embedding of skill $s$. The retrieved task-specific skill set is
\begin{align}
\mathcal{Q}_t(x)
=
\operatorname{TopK}
\left(
\left\{
s \in \mathcal{A}^{k}_t :
\cos(e_x,e_s)\ge \tau_{\text{emb}}
\right\},
K
\right),
\label{eq:skill_retrieval}
\end{align}
where $\tau_{\text{emb}}$ is the retrieval threshold and $K$ is the maximum number of task-specific skills loaded into the prompt. The final skill-conditioned policy for task $x$ is thus the union of the active general skills and the retrieved task-specific set, \ie, $\pi_\theta(a_t\mid h_t, \mathcal{A}^{\text{gen}}_t\cup\mathcal{Q}_t(x))$. Because retrieval is restricted to the current active set, lifecycle decisions directly affect the external capability exposed to later rollouts.

Intuitively, external skills must be relevant before they can be useful. At the same time, retrieval relevance alone does not tell us whether keeping a skill external is still worthwhile. Different active skills may be selected for the same type of tasks while contributing very different amounts of external value. This motivates an explicit estimate of the marginal external contribution of each active skill.

\subsection{Marginal External Contribution Estimation}
\label{sec:marginal_contribution}
Given the routed skill set from Section~\ref{sec:skill_retrieval}, the next problem is to decide whether each active skill still deserves external support. Even after restricting the candidate set, enumerating skill combinations to estimate the marginal external contribution (MEC) of each active skill remains impractical. \methodname{} therefore uses leave-one-skill-out validation as a tractable local approximation.

For an audited skill $s \in \mathcal{A}_t$, let $\mathcal{V}_t(s)$ denote the subset of validation tasks whose rollouts use skill $s$ under the current active set, \ie, tasks $x$ for which $s \in \mathcal{A}^{\text{gen}}_t\cup\mathcal{Q}_t(x)$. Let $\operatorname{Perf}(\mathcal{V};\mathcal{A})$ denote the validation performance on subset $\mathcal{V}$ when the active set is $\mathcal{A}$. The MEC of $s$ at audit step $t$ is defined by leave-one-skill-out validation:
\begin{align}
\Delta_t(s)=
\operatorname{Perf}\!\left(\mathcal{V}_t(s);\mathcal{A}_t\right)
-
\operatorname{Perf}\!\left(\mathcal{V}_t(s);\mathcal{A}_t \setminus \{s\}\right).
\label{eq:marginal_contribution}
\end{align}
To reduce audit noise, \methodname{} smooths current-round estimates with an exponential moving average, $\bar{\Delta}_t(s)=\alpha \Delta_t(s)+(1-\alpha)\bar{\Delta}_{t-1}(s)$. We use $\bar{\Delta}_t(s)$ rather than $\Delta_t(s)$ for lifecycle management. A positive value means the current policy still benefits from keeping that capability external, while a near-zero or negative value means the capability may have been absorbed, become redundant, or become harmful as an external aid. This is a local estimate conditioned on the current policy, active set, and routing behavior, not a global attribution over all possible skill subsets. It is reliable when validation tasks routed to $s$ reflect the same local behavior seen during rollout; in that case, removing $s$ on those tasks is a direct test of whether the policy still needs its external support. Lemma~\ref{lem:retention_proxy} in Appendix~\ref{app:theory} further explains this local surrogate.

\subsection{Dynamic Skill Lifecycle Management for Reinforcement Learning}
\label{sec:lifecycle_rl}
We now couple skill lifecycle updates with policy optimization through alternating optimization. Eq.~\eqref{eq:slim_problem} contains a continuous policy variable $\theta$ and a discrete active-set variable $\mathcal{A}$; the former can be updated by gradient-based RL, while the latter requires non-differentiable set operations under the black-box cost $\Omega(\mathcal{A})$. \methodname{} therefore decomposes each audit cycle into a GRPO policy update with the active set fixed, followed by skill lifecycle management with the policy fixed. For analysis, we write Eq.~\eqref{eq:slim_problem} as $\mathcal{J}(\theta,\mathcal{A}) :=\mathbb{E}_{x\sim\mathcal{X}}\big[\operatorname{Perf}(x;\pi_\theta,\mathcal{A})\big]- \Omega(\mathcal{A})$, subject to its latent capacity constraint. 

In the GRPO stage, $\mathcal{A}_t$ is fixed, so $\Omega(\mathcal{A}_t)$ is constant and the update only needs to improve the policy under the current external support. Under the local surrogate alignment in Assumption~\ref{ass:local_surrogate_alignment}, $J_{\text{GRPO}}(\theta;\mathcal{A}_t)$ serves as a local surrogate for improving the performance term of $\mathcal{J}(\theta,\mathcal{A})$. This step may reduce the dependence of the policy on some external skills, but whether such dependence has actually disappeared is measured by MEC rather than assumed.

In the skill lifecycle management stage, $\theta_{t+1}$ is fixed. Any operation on the active set is desirable as long as the updated active set $\mathcal{A}'_t$ makes $\mathcal{J}(\theta_{t+1},\mathcal{A}'_t)-\mathcal{J}(\theta_{t+1},\mathcal{A}_t)$ positive. By Eq.~\eqref{eq:marginal_contribution}, the performance difference caused by removing an active skill can be estimated by its MEC. The difficulty is the cost term $\Omega(\mathcal{A})$, which is an unknown strictly monotone set function, and globally searching over all active-set configurations is infeasible. We therefore restrict lifecycle management to single-skill moves. For such moves, the absolute cost difference is bounded under the operating regime in Lemma~\ref{lem:single_step_cost_envelope}. Given this, \methodname{} defines state-transition rules around the $\bar{\Delta}_t(s)$, so that each accepted move is a bounded-risk local update.

\emph{Retain} keeps an audited skill $s$ active when its smoothed MEC is clearly positive. Here $\tau_{\mathrm{keep}}$ indicates that the value created by $s$ is sufficiently larger than its external support cost, so the skill should keep conditioning the policy in later rollouts.
\begin{align}
\text{if}\, \bar{\Delta}_t(s) \ge \tau_{\mathrm{keep}}, \, \text{then}\, s \in \mathcal{A}_{t+1}.
\label{eq:retain_rule}
\end{align}
\emph{Retire} removes an audited skill $s$ when its marginal contribution becomes negligible and this signal remains stable after sufficient exposure. Here  $u_t(s)$ is the cumulative exposure count and $\ell_t(s)$ is the low-contribution streak. These two conditions protect low-frequency skills from being removed before enough routed evidence is observed. The threshold $\tau_{\mathrm{retire}}$ acts as a conservative lower surrogate for the external cost recovered by removal. Specifically, removing $s$ may lose $\bar{\Delta}_t(s)$ in performance, but it also saves the unknown external cost of keeping $s$ active. Retiring $s$ only means that it no longer provides enough marginal value under the current policy; it may have been internalized, become redundant, or become noisy or obsolete.
When $\tau_{\mathrm{retire}}\le\bar{\Delta}_t(s)<\tau_{\mathrm{keep}}$, \methodname{} makes no immediate lifecycle transition for $s$ and keeps it active until later audits provide stronger evidence.
\begin{align}
\text{if}\, \bar{\Delta}_t(s) < \tau_{\mathrm{retire}},\,
u_t(s)\ge n_{\min},\,
\ell_t(s)\ge p,\,
\text{then}\, s \notin \mathcal{A}_{t+1}.
\label{eq:retire_rule}
\end{align}
\emph{Expand} adds a new skill $s_{\text{new}}$ when the current active skill $s$ persistently fails to cover its routed task region. Here $N_t(s)$ is the accumulated number of task failures routed to $s$. The threshold $\tau_{\mathrm{expand}}$ indicates that the current with-skill performance is low enough to leave large improvement room, so a new external skill is expected to provide enough gain to cover a reasonable one-step cost increase.
\begin{align}
\text{if}\,
\operatorname{Perf}(\mathcal{V}_t(s); \mathcal{A}_t) < \tau_{\mathrm{expand}},\,
N_t(s)\ge n_{\text{expand}},\,
\bar{\Delta}_t(s)<\tau_{\mathrm{keep}},\,
\text{then}\,
\mathcal{A}_{t+1} = \mathcal{A}_t \cup \{s_{\text{new}}\},
\label{eq:expand_rule}
\end{align}
Lemma~\ref{lem:discrete_ascent} gives local sufficient conditions where these heuristic rules are conservative or improving for $\mathcal{J}(\theta_{t+1},\mathcal{A}_t)$, and Lemma~\ref{lem:latent_capacity} formalizes that a currently audited externally necessary skill is protected when its MEC remains above the retire threshold. Intuitively, if the policy still depends on a necessary skill, removing it hurts validation performance, $\bar{\Delta}_t(s)$ remains high, and retirement is blocked; if $\bar{\Delta}_t(s)$ stays near zero, active retention is unnecessary because the skill may have been internalized or become redundant.
Additionally, \methodname{} subsumes prior methods as boundary cases. If retirement is disabled, i.e., $\mathcal{A}_{t+1}\supseteq\mathcal{A}_t$ for all $t$, it reduces to a SkillRL-like persistent augmentation regime. Under the monotonicity of $\Omega$, the external support cost cannot decrease and may eventually degrade performance. If expansion is disabled and retirement is enforced until $\mathcal{A}_t=\varnothing$, it reduces to a Skill0-like zero-skill regime. Since required external capabilities must then be absorbed into $\mathcal{I}$, this may violate the finite-capacity constraint in Eq.~\eqref{eq:slim_problem}, thereby crowding out other useful capabilities.

\section{Implementation}
\begin{wrapfigure}{r}{0.45\textwidth}
\vspace{-1em}
\begin{minipage}{0.45\textwidth}
\noindent\rule{\linewidth}{0.8pt}
\vspace{-15pt}
\captionsetup{labelfont=bf}
\captionof{algorithm}{Practical SLIM Training Loop}
\label{alg:slim_implementation}
\vspace{-4pt}
\noindent\rule{\linewidth}{0.4pt}
\vspace{-12pt}
\footnotesize
\begin{algorithmic}[1]
\REQUIRE Initial policy $\pi_\theta$, skill bank $\mathcal{S}$, active set $\mathcal{A}_0$, training tasks, validation tasks, retrieval cap $K$, audit interval $d$, task-specific audit budget $M$, expansion budget $B$
\ENSURE Trained policy $\pi_\theta$, final active set $\mathcal{A}_T$, retired skills, expanded skills, lifecycle logs
\FOR{GRPO step $r=1,\ldots,T$}
    \STATE Sample tasks, retrieve $\mathcal{Q}_t(x)$ by Eq.~\eqref{eq:skill_retrieval}, and roll out $\pi_\theta(\cdot \mid h,\mathcal{A}^{\mathrm{gen}}_t\cup\mathcal{Q}_t(x))$.
    \STATE Update $\theta$ with GRPO using the collected rollouts.
    \IF{$r \bmod d = 0$}
        \STATE Run validation with current routing; record routed skills, outcomes, and routed failures.
        \STATE Select audited skills under the bounded audit budget, including top-$M$ skills by recent routed usage.
        \FOR{each audited skill $s$}
            \STATE Compute $\Delta_t(s)$ by leave-one-skill-out validation using Eq.~\eqref{eq:marginal_contribution}.
            \STATE Update $\bar{\Delta}_t(s)$ and lifecycle statistics.
            \STATE Apply retain/retire rules using Eq.~\eqref{eq:retain_rule} and Eq.~\eqref{eq:retire_rule}.
        \ENDFOR
        \STATE Create up to $B$ task-specific skills from routed failure buckets by Eq.~\eqref{eq:expand_rule}; update $\mathcal{A}_t$.
    \ENDIF
\ENDFOR
\end{algorithmic}
\vspace{-8pt}
\noindent\rule{\linewidth}{0.8pt}
\end{minipage}
\vspace{-1.2em}
\end{wrapfigure}

\noindent\textbf{Algorithm.}
Algorithm~\ref{alg:slim_implementation} summarizes the practical training loop of \methodname{}. The implementation follows the three components in Section~\ref{sec:skill_retrieval}--\ref{sec:lifecycle_rl}, \ie, each GRPO step retrieves active skills, performs skill-conditioned rollouts, and updates the policy; every audit interval, \methodname{} estimates marginal external contribution and applies retain, retire, or expand operations. To keep auditing affordable, \methodname{} does not evaluate every active skill. Lifecycle audits are performed every $d=10$ GRPO steps, and each audit considers at  most $M=4$ skills with the highest recent routed usage among skills that appeared in top-$K$ retrieval.

\noindent\textbf{Training and Inference Settings.}
For training, task-specific retrieval uses Qwen3-Embedding-0.6B~\cite{zhang2025qwen3embedding} with $K=3$ and $\tau_{\mathrm{emb}}=0.45$. We optimize the policy with GRPO using outcome-level rewards. Specifically, each completed rollout receives the environment success reward, with invalid-action penalties applied during trajectory collection. In the main \methodname{} runs, we disable both policy-side KL loss and KL-in-reward regularization. Retain and retire decisions are implemented as aforementioned. Expansion uses routed failure buckets and creates standalone task-specific \texttt{SKILL.md} artifacts with an Anthropic-style skill-creator workflow~\cite{anthropic2025skills}. During final inference, the agent can run with skills by retrieving active skills before each rollout; the prompt contains the active general skills and the retrieved task-specific set $\mathcal{Q}_T(x)$, and no lifecycle update is performed. Prompt templates, lifecycle thresholds, full training settings, and inference details are provided in Appendix~\ref{sec:slim_setup}.

\section{Experiment}
\label{sec:experiment}
\subsection{Evaluation Setup}
\noindent\textbf{Benchmarks Baselines.}
We conduct all main experiments with Qwen3-4B~\cite{qwen3technical} on ALFWorld~\cite{shridhar2020alfworld} and SearchQA~\cite{Jin2025SearchR1TL}. ALFWorld covers Pick, Look, Clean, Heat, Cool, and Pick2 household tasks, while SearchQA covers NQ~\cite{kwiatkowski2019nq}, TriviaQA~\cite{joshi2017triviaqa}, PopQA~\cite{mallen2023popqa}, HotpotQA~\cite{yang2018hotpotqa}, 2Wiki~\cite{ho20202wiki}, MuSiQue~\cite{trivedi2022musique}, and Bamboogle~\cite{press2023bamboogle}. We compare against prompt-based, agent/memory-based, and RL-based baselines, including ReAct~\cite{yao2023react}, Reflexion~\cite{shinn2023reflexion}, Mem0~\cite{mem0}, ExpeL~\cite{zhao2024expel}, GRPO~\cite{shao2024deepseekmath}, EvolveR~\cite{wu2025evolver}, SkillRL~\cite{xia2026skillrl}, and Skill0~\cite{lu2026skill0}. Full baseline details are provided in Appendix~\ref{sec:baseline_setup}.

\noindent\textbf{Evaluation.}
We report success rate on both benchmarks. A trial succeeds if the agent completes the ALFWorld objective or returns a correct SearchQA final answer under the shared benchmark evaluator. All methods use the same train/validation/test protocol: training uses the train split, lifecycle auditing and hyperparameter tuning use validation, and final reporting uses test. All RL-based methods are trained without cold-start SFT or warmup. Appendix~\ref{app:additional_results} further reports cross-task generalization, skill-bank transfer, SLIM performance robustness, initialization sensitivity, expanded baseline comparisons, and audit overhead. Detailed splits and fairness controls are in Appendix~\ref{app:setup}.

\subsection{Main Results}
\begin{table*}[t]
\centering
\definecolor{slimBestRed}{RGB}{232,204,198}
\definecolor{slimSecondBlue}{RGB}{202,216,224}
\newcommand{\bestcell}[1]{\cellcolor{slimBestRed}\textbf{#1}}
\newcommand{\secondcell}[1]{\cellcolor{slimSecondBlue}\underline{#1}}
\caption{Main results on ALFWorld and SearchQA. All entries report success rate. $^\dagger$ denotes evaluation with retrieved external skills; unless otherwise specified, $^\dagger$ has the same meaning in later tables. Avg. denotes micro average, also used below. \textbf{\colorbox{slimBestRed}{Best}} and \underline{\colorbox{slimSecondBlue}{second-best}} are highlighted.}
\label{tab:main_results}
\small
\begin{adjustbox}{max width=0.90\textwidth}
\begin{tabular}{l|ccccccc|cccccccc}
\toprule
\multirow{2}{*}{Method}
& \multicolumn{7}{c|}{ALFWorld}
& \multicolumn{8}{c}{SearchQA} \\
\cmidrule(lr){2-8}\cmidrule(lr){9-16}
& Pick & Look & Clean & Heat & Cool & Pick2 & Avg.
& NQ & TriviaQA & PopQA & HotpotQA & 2Wiki & MuSiQue & Bamboogle & Avg. \\
\midrule
\multicolumn{16}{c}{\textit{Prompt-based methods}} \\
\midrule
Zero-Shot & 82.9 & 0.0 & 31.2 & 23.1 & 21.7 & 30.0 & 41.4 & 28.5 & 49.6 & 32.7 & 23.5 & 27.7 & 5.3 & 35.2 & 32.3 \\
Few-Shot & 80.0 & 40.0 & 37.5 & 7.7 & 17.4 & 15.0 & 39.1 & 32.9 & 55.6 & 34.9 & 28.0 & 28.5 & 7.2 & \secondcell{37.6} & 35.5 \\
Zero-Shot$^\dagger$ & 94.3 & 40.0 & 81.2 & 38.5 & 17.4 & 55.0 & 63.3 & 26.6 & 48.2 & 31.8 & 22.7 & 27.9 & 5.3 & 33.6 & 31.5 \\
Few-Shot$^\dagger$ & 94.3 & 60.0 & \secondcell{84.4} & 38.5 & 34.8 & 30.0 & 64.1 & 32.3 & 54.6 & 34.6 & 28.1 & 27.2 & 7.4 & \bestcell{38.4} & 34.8 \\
\midrule
\multicolumn{16}{c}{\textit{Agent- or memory-based methods}} \\
\midrule
ReAct & \bestcell{100.0} & 33.3 & 25.0 & 53.8 & 23.8 & 26.7 & 50.5 & 33.2 & 54.5 & 36.5 & 28.8 & 30.8 & 7.7 & 33.6 & 36.4 \\
Reflexion & 93.9 & 30.0 & 52.0 & 20.0 & 47.8 & 68.2 & 59.4 & 22.3 & 45.7 & 25.8 & 22.6 & 28.3 & 4.9 & 31.2 & 29.1 \\
Mem0 & 93.9 & 30.0 & 32.0 & 26.7 & 17.4 & 18.2 & 42.2 & 29.3 & 50.6 & 33.3 & 24.6 & 28.2 & 6.5 & 30.4 & 33.1 \\
ExpeL & \secondcell{97.1} & 62.5 & 38.5 & 13.3 & 21.7 & 61.9 & 53.5 & 23.5 & 47.0 & 29.6 & 22.8 & 30.0 & 6.5 & 28.0 & 31.0 \\
\midrule
\multicolumn{16}{c}{\textit{RL-based methods}} \\
\midrule
GRPO & 85.4 & \bestcell{100.0} & 49.8 & 64.6 & 53.1 & 54.2 & 67.2 & 35.9 & 57.8 & 36.5 & 30.8 & 30.1 & 9.2 & 30.4 & 37.5 \\
GRPO$^\dagger$ & 84.6 & 62.5 & 48.9 & 68.8 & 61.0 & 80.8 & 68.8 & 36.4 & 58.2 & 37.0 & 31.2 & 30.4 & 9.6 & 31.2 & 37.9 \\
EvolveR & 67.6 & 37.5 & 49.3 & 15.6 & 36.0 & 36.5 & 39.8 & 36.0 & 57.5 & 36.8 & 30.6 & 30.0 & 9.5 & 29.6 & 37.4 \\
SkillRL$^\dagger$ & 90.6 & 75.0 & 54.6 & \secondcell{76.2} & 67.7 & \bestcell{87.5} & \secondcell{75.0} & 36.8 & 59.8 & 36.9 & 31.5 & 29.7 & 10.3 & 31.2 & 38.1 \\
Skill0 & 93.6 & \secondcell{87.5} & 59.8 & 66.7 & 57.6 & 78.4 & 74.2 & 37.9 & 59.5 & 38.6 & 32.7 & \bestcell{31.9} & 10.3 & 32.8 & \secondcell{39.3} \\
\methodname{} & 91.4 & 80.0 & 46.9 & 61.5 & \secondcell{73.9} & \secondcell{85.0} & 72.7 & \bestcell{38.6} & \bestcell{62.2} & \secondcell{40.0} & \bestcell{37.2} & \secondcell{31.7} & \bestcell{12.8} & \secondcell{37.6} & \bestcell{41.0} \\
\methodname{}$^\dagger$ & 92.9 & \bestcell{100.0} & \bestcell{91.4} & \bestcell{78.3} & \bestcell{88.5} & 81.2 & \bestcell{87.5} & \secondcell{38.4} & \secondcell{62.1} & \bestcell{40.4} & \secondcell{36.9} & 31.5 & \secondcell{12.7} & 36.0 & \bestcell{41.0} \\
\bottomrule
\end{tabular}
\end{adjustbox}
\vspace{-0.5 em}
\end{table*}

\begin{figure}[t]
    \centering
    \includegraphics[width=0.8\linewidth]{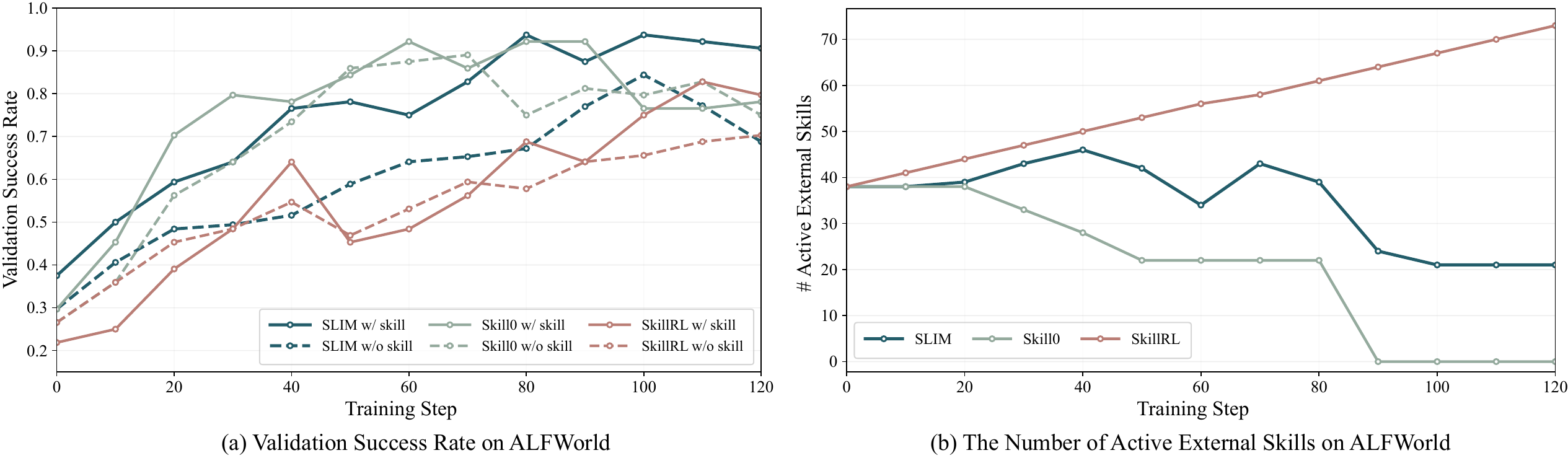}
    \caption{Training dynamics on ALFWorld. Panel (a) compares with-skill and no-skill evaluation curves for \methodname{}, SkillRL and Skill0, where solid lines denote evaluation with external skills, while dashed lines denote without external skills. Panel (b) tracks the number of active skills over training.}
    \vspace{-16pt}
    \label{fig:training_dynamics}
\end{figure}

\noindent\textbf{Overall Comparison.}
Table~\ref{tab:main_results} reports the main comparison. On ALFWorld, \methodname{}$^\dagger$ reaches 87.5, outperforming the strongest non-SLIM baseline, SkillRL$^\dagger$, by 12.5 points. It also substantially improves over GRPO, GRPO$^\dagger$, and Skill0, showing that the gain is not produced by ordinary RL, naive skill injection, persistent accumulation, or forced zero-skill inference alone. The gap between \methodname{} and \methodname{}$^\dagger$ is also large (72.7 vs. 87.5), indicating that ALFWorld contains long-horizon procedural behaviors where some capabilities remain better kept externally.
SearchQA shows a different regime. \methodname{} and \methodname{}$^\dagger$ both reach 41.0, improving over the strongest non-SLIM baseline, Skill0, by 1.7 points. Here the inference-time gap between \methodname{} and \methodname{}$^\dagger$ nearly vanishes, suggesting that the benefit is largely reflected in the trained policy rather than in strong final external dependence. Together, the results support our central claim that the endpoint of skill-based agentic RL is task-dependent, \ie, some domains require retained external procedural skills, while others can absorb or discard most external support after training.

\noindent\textbf{Detailed Analysis.}
The ALFWorld gains concentrate on procedural state-transformation tasks. \methodname{}$^\dagger$ reaches 91.4 on Clean and 88.5 on Cool, far above SkillRL$^\dagger$ and Skill0. This aligns with the lifecycle probe in Figure~\ref{fig:lifecycle_analysis}, \ie, the clean-specific skill \texttt{cle\_003} remains externally valuable, while several cooling skills are retired, suggesting that the gain comes from filtering procedural support rather than preserving every task-specific skill. Heat shows a smaller but consistent improvement, indicating that lifecycle management is most useful when procedures are compositional and unevenly covered.
The advantage is not uniform across all task types. Simpler object-acquisition tasks such as Pick leave limited headroom for lifecycle management, while \methodname{}$^\dagger$ reaches full success on Look by retaining skills that cover this task type. More importantly, GRPO$^\dagger$ improves Pick2 and Cool but hurts or does not improve Look and Pick, showing that naive skill insertion is not uniformly beneficial. This supports our lifecycle control, where skills remain active only when their marginal effect is positive under the current policy.
On SearchQA, the improvement is smaller but broadly distributed: \methodname{} or \methodname{}$^\dagger$ is best or near-best on most subsets. Naive skill insertion can even hurt, as Zero-Shot$^\dagger$ and Few-Shot$^\dagger$ fall below their no-skill counterparts. Thus, SearchQA reflects a lower-dependence regime where lifecycle-guided training improves the policy more than inference-time skill insertion, complementing ALFWorld where retained procedural skills remain valuable.

\subsection{Training Dynamics}

\begin{wrapfigure}{r}{0.48\textwidth}
\centering
\vspace{-1em}
\captionof{table}{Ablation study on ALFWorld. All variants are trained under the same settings and evaluation protocol as SLIM (with skill).}
\label{tab:ablation_lifecycle}
\resizebox{0.65\linewidth}{!}{
\small
\setlength{\tabcolsep}{4pt}
\begin{tabular}{l|c}
\toprule
Method & ALFWorld Avg. \\
\midrule
\methodname{} & \textbf{87.5} \\
\midrule[0.5pt]
w/o Retirement & 73.4 \\
w/o Expansion & 78.9 \\
Random Audit & 68.8 \\
Fixed Active Set Size & 75.6 \\
\bottomrule
\end{tabular}

}
\includegraphics[width=0.75\linewidth]{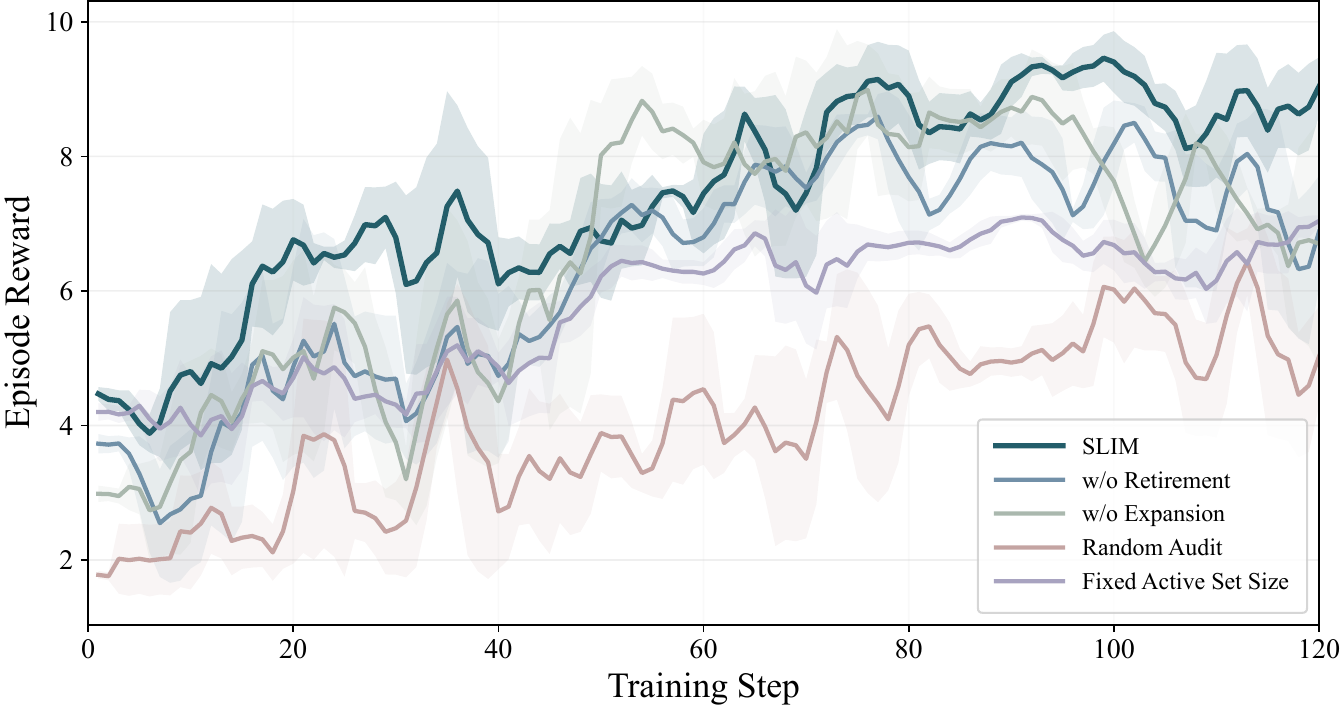}
\captionof{figure}{Training reward dynamics of SLIM and its ablation variants on ALFWorld. For readability, we apply a centered moving average with a window size of 5 training steps. The shaded region is the local variation within the window.}
\label{fig:ablation_reward}
\vspace{-1.2em}
\end{wrapfigure}

Figure~\ref{fig:training_dynamics} compares training dynamics on ALFWorld. SkillRL follows persistent accumulation where its active skill count grows from 38 to 73 throughout training. Although both its with-skill and no-skill validation curves improve, its final performance remains below \methodname{}. This shows that keeping more external skills does not necessarily yield better final performance, consistent with our motivation that large skill banks can introduce routing and context overhead.
Skill0 exhibits the opposite pattern. Its active set decreases from 38 to 0. Its no-skill performance becomes strong in the later stage, indicating that the policy indeed learns from skill-conditioned training. However, once the active skill set reaches zero around epoch 90, validation drops from 92.2\% to 76.6\% in the following audit interval. This supports our claim that retirement is not equivalent to successful internalization since forced zero-skill inference may remove useful external support, especially for unstable, low-frequency or long-tail capabilities.

\methodname{} learns a non-monotonic trajectory. Its active set first expands from 38 to 46, fluctuates as expansion and retirement alternate, and finally stabilizes at a compact non-empty set of 21 skills. Meanwhile, no-skill performance rises from 29.7\% to 84.4\%, showing that the policy itself is learning, while with-skill performance peaks at 93.8\% and remains 90.6\% at the end. Thus, \methodname{} does not trade policy learning for external dependence. It improves the policy while preserving a set of external skills that still provide marginal contribution.

\subsection{Ablation Study}
\noindent\textbf{Lifecycle Components.}
Table~\ref{tab:ablation_lifecycle} isolates the lifecycle operations. Removing retirement drops ALFWorld success from 87.5 to 73.4, showing that expansion without deletion degenerates toward SkillRL-like accumulation where more active skills do not imply better performance. Removing expansion reaches 78.9, higher than w/o Retirement but still 8.6 points below \methodname{}, showing that pruning alone cannot repair under-covered task regions. Since w/o Expansion still exceeds several baselines, the full gain cannot be attributed only to the skill-creator backbone; expansion matters because it fills missing task-specific coverage. Figure~\ref{fig:ablation_reward} shows the same pattern: local reward improvements can be transient, but the final endpoint remains worse without the full lifecycle.

\noindent\textbf{Effectiveness of External Marginal Contribution.}
``Random Audit'' keeps the same operation space but replaces contribution-aware decisions with stochastic ones: retain/delete is sampled with probabilities 0.8/0.2, and expansion is triggered independently with probability 0.1. It obtains only 68.8, the largest drop, and its reward curve stays below the other variants for most of training. Thus, the gain of \methodname{} does not come from random skill-set perturbation; lifecycle decisions must track whether a skill still provides marginal external value under the current policy.

\noindent\textbf{Beyond Prompt-Budget Control.}
``Fixed Active Set Size'' controls the active skill count at the initial size of 38, using LRU removal after expansion and expansion after retirement to keep the budget fixed. It reaches 75.6, still 11.9 points below \methodname{}. This rules out a pure prompt-budget explanation: the key is not only how many skills remain active, but which skills are retained, removed, and expanded.

\subsection{Case Study: Analysis of Skill Lifecycle Management}
\begin{figure*}[t]
\centering
\includegraphics[width=0.75\textwidth]{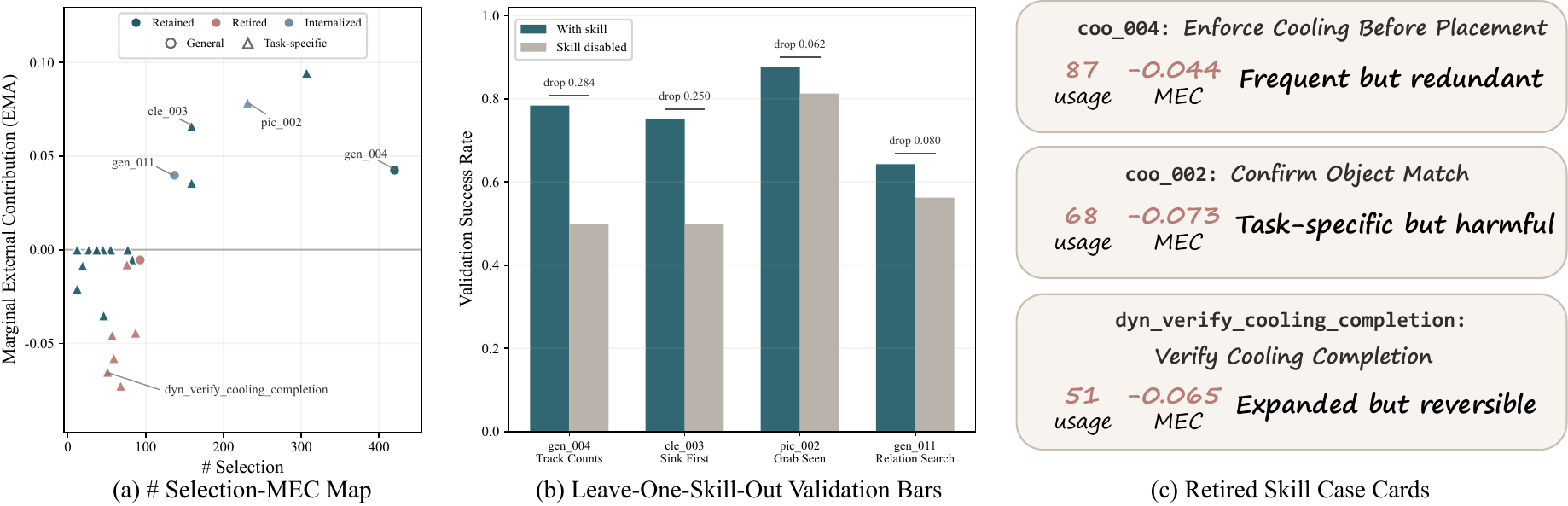}
\caption{Case study of skill lifecycle on ALFWorld. Panel (a) plots selection count against marginal external contribution (MEC) for retained, retired, and internalized skills. Panel (b) reports leave-one-skill-out validation bars for representative retained skills. Panel (c) shows retired skill cases.}
\vspace{-1.5em}
\label{fig:lifecycle_analysis}
\end{figure*}

We use a diagnostic-only lifecycle probe to explain the \methodname{} operations. The probe logs each audited skill's selection count, marginal external contribution (MEC), with-skill performance, disabled-skill performance, and lifecycle outcome. We mark a skill as \emph{internalized} only for analysis when it is frequently selected but has near-zero MEC and disabling it causes only a small validation drop under the current policy. This label is not used as a training signal, and all operations still follow Section~\ref{sec:lifecycle_rl}. While Figure~\ref{fig:training_dynamics} shows that the active set first expands, then contracts, and finally remains non-empty, Figure~\ref{fig:lifecycle_analysis} explains why this happens. The final active set is not simply the residue of incomplete internalization, but the result of contribution-aware filtering.

\noindent\textbf{Lifecycle Follows MEC.}
Panel (a) and (b) show that lifecycle decisions are governed by MEC rather than selection frequency alone. Broad and frequently selected skills such as \texttt{pic\_002} and \texttt{gen\_011} become close to internalized, as disabling them causes only 0.062 and 0.080 drops; however, high frequency is not sufficient for retirement, since \texttt{gen\_004} is also broad and frequent but remains externally valuable with a much larger 0.284 drop. Conversely, low frequency does not imply low value: \texttt{cle\_003} is less frequently selected but has high MEC, and disabling it causes a 0.250 drop, making it globally infrequent but locally indispensable. Panel (c) further shows that frequent, task-specific, or newly expanded skills can all be retired once their marginal external value becomes negative. Thus, \methodname{} retires a skill only when its external marginal value disappears under the current policy, while preserving long-tail skills that remain useful on specific routed subsets.

\noindent\textbf{Empirical Support for Local Lifecycle Signals.}
The case study also connects the local theory view to observed behavior. The MEC probe shows that leave-one-skill-out drops align with retain and retire outcomes, while Figure~\ref{fig:training_dynamics} shows that these local decisions produce a non-monotonic active set rather than monotone accumulation or elimination. The ablations further support this mechanism: w/o Retirement falls to 73.4, Random Audit to 68.8, and Fixed Active Set Size to 75.6, indicating that performance depends on contribution-aware lifecycle decisions rather than unbounded skill growth, random mutation, or simple prompt-budget control.

\section{Conclusion and Future Work}
We present \methodname{}, a framework for dynamic skill lifecycle management in agentic RL. The key idea of \methodname{} is that external skills should neither be assumed to accumulate indefinitely nor be forced to vanish toward zero-skill inference. Instead, the active external skill set should be treated as a dynamic optimization variable and updated with policy learning, allowing the learned endpoint to avoid both persistent full accumulation and forced zero-skill inference.
Across ALFWorld and SearchQA, \methodname{} improves over standard RL and prior skill-based methods while learning a qualitatively different endpoint. Training dynamics, ablations, and lifecycle case studies show that the active set evolves non-monotonically and that contribution-aware decisions are essential. These results suggest a division of labor between model parameters and external procedural memory: reusable behaviors can become less externally necessary, while narrow or locally consequential skills remain useful as external modules. Future work can extend this perspective to richer multimodal environments, finer-grained lifecycle units, and more scalable auditing methods.

\bibliography{ref}
\bibliographystyle{plainnat}


\clearpage
\appendix
\setcounter{table}{0} 
\setcounter{figure}{0}
\setcounter{equation}{0}
\renewcommand{\thetable}{A\arabic{table}}
\renewcommand\thefigure{A\arabic{figure}} 
\renewcommand\theequation{A\arabic{equation}}
The appendix is organized as below:\\
$\bullet$ Appendix~\ref{app:theory} presents the local theoretical analysis that complements \methodname{}.\\
$\bullet$ Appendix~\ref{sec:implementation_details} describes the detailed settings of \methodname{} and baselines.\\
$\bullet$ Appendix~\ref{app:setup} describes the detailed experimental setup, including training configuration, retrieval settings, and lifecycle auditing details.\\
$\bullet$ Appendix~\ref{app:additional_results} provides additional experimental results, including supplementary tables, curves, and robustness analyses.\\
$\bullet$ Appendix~\ref{app:prompts} introduces all prompts used in training, evaluation, and skill expansion.\\
$\bullet$ Appendix~\ref{app:skill_bank} summarizes the initial and expanded skill banks used by \methodname{}.\\
$\bullet$ Appendix~\ref{app:limitations} discusses the limitations of \methodname{}.\\
$\bullet$ Appendix~\ref{app:broader_impacts} discusses the broader impacts of this research.

\section{Theoretical Analysis}
\label{app:theory}
This section provides the theoretical analysis that complements Section~\ref{sec:skill_retrieval}, Section~\ref{sec:marginal_contribution}, and Section~\ref{sec:lifecycle_rl}. We keep the analysis local because Eq.~\eqref{eq:slim_problem} contains a continuous policy variable, a discrete active set, a conceptual black-box cost $\Omega$, and a latent capacity constraint. The goal is not to prove global optimization of Eq.~\eqref{eq:slim_problem}; instead, the lemmas give local sufficient conditions that explain why the lifecycle heuristics can be conservative under the stated operating assumptions. Throughout, skills outside the active external set may belong either to the latent internalized set $\mathcal{I}$ or to the inactive external set $\mathcal{U}$, so deactivation is not identified with internalization.

Let $F(\theta,\mathcal{A})=\mathbb{E}_{x\sim\mathcal{X}}[\operatorname{Perf}(x;\pi_\theta,\mathcal{A})]$ denote the expected task performance term in Eq.~\eqref{eq:slim_problem}, and let $\mathcal{J}(\theta,\mathcal{A})=F(\theta,\mathcal{A})-\Omega(\mathcal{A})$ denote its performance-cost objective.

\begin{assumption}
\label{ass:retrieval_adequacy}
For a task $x$ of type $k$, the hierarchical retriever in Eq.~\eqref{eq:skill_retrieval} preserves locally useful active task-specific skills with bounded miss probability. Specifically, if an active task-specific skill $s\in\mathcal{A}^{k}_t$ has task-conditioned marginal gain at least $\gamma_{\mathrm{ret}}>0$, then
\begin{align}
\Pr\!\left(s\in\mathcal{Q}_t(x)\right)\ge 1-\delta_{\mathrm{ret}}\,.
\end{align}
This is the standard recall requirement when retrieval is used as a candidate-set reduction step, and is consistent with hierarchical skill libraries and skill routing systems~\cite{xia2026skillrl,zheng2026skillrouter}.
\end{assumption}

\begin{lemma}
\label{lem:retrieval_reduction}
Under Assumption~\ref{ass:retrieval_adequacy}, for any fixed locally useful active task-specific skill $s\in\mathcal{L}_t(x)$, task-conditioned retrieval misses $s$ with probability at most $\delta_{\mathrm{ret}}$. Consequently, the probability of missing at least one skill in $\mathcal{L}_t(x)$ is at most $|\mathcal{L}_t(x)|\delta_{\mathrm{ret}}$ by a union bound.
\end{lemma}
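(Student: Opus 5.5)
The plan is to make the implicit definition of $\mathcal{L}_t(x)$ explicit and then apply Assumption~\ref{ass:retrieval_adequacy} skill by skill, followed by a union bound. Define
\[
\mathcal{L}_t(x)=\{s\in\mathcal{A}^{k}_t : s \text{ has task-conditioned marginal gain at least } \gamma_{\mathrm{ret}}\}.
\]
This is a subset of the active task-specific pool for the type $k$ of $x$. It is determined by the current policy and active set at audit step $t$, so it is fixed (non-random) with respect to the retrieval randomness.

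\textbf{First claim.} Fix any $s\in\mathcal{L}_t(x)$. By the definition of $\mathcal{L}_t(x)$, the hypothesis of Assumption~\ref{ass:retrieval_adequacy} holds for $s$. Hence $\Pr(s\in\mathcal{Q}_t(x))\ge 1-\delta_{\mathrm{ret}}$. Taking complements gives
\[
\Pr(s\notin\mathcal{Q}_t(x))\le\delta_{\mathrm{ret}}.
\]
This establishes the per-skill miss bound.

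\textbf{Second claim.} Let $E_s=\{s\notin\mathcal{Q}_t(x)\}$ for each $s\in\mathcal{L}_t(x)$. The event ``retrieval misses at least one locally useful skill'' is exactly $\bigcup_{s\in\mathcal{L}_t(x)}E_s$. By Boole's inequality,
\[
\Pr\Big(\bigcup_{s\in\mathcal{L}_t(x)} E_s\Big)\le\sum_{s\in\mathcal{L}_t(x)}\Pr(E_s)\le|\mathcal{L}_t(x)|\,\delta_{\mathrm{ret}}.
\]
This step needs no independence assumption across skills, which matters because TopK retrieval introduces dependence among the $E_s$. We should also note that the bound is trivial if $|\mathcal{L}_t(x)|\delta_{\mathrm{ret}}\ge 1$. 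Moreover, $|\mathcal{L}_t(x)|\le K$ is not required for the statement, even though only $K$ skills can be retrieved. If $|\mathcal{L}_t(x)|>K$, misses are deterministic, and the assumption forces $\delta_{\mathrm{ret}}$ to be large enough that the bound stays consistent.

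\textbf{Main obstacle.} The only delicate point is definitional rather than technical. $\mathcal{L}_t(x)$ is not defined before the lemma, so its membership criterion must match the hypothesis of Assumption~\ref{ass:retrieval_adequacy} exactly. In particular, the probability in the assumption must be over the same randomness (task sampling within type $k$ and/or stochastic retrieval) with $\mathcal{L}_t(x)$ held fixed. Once that alignment is stated, both claims follow immediately.
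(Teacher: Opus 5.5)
Your proposal is correct and follows the paper's proof: you define $\mathcal{L}_t(x)$ as the active task-specific skills whose task-conditioned marginal gain is at least $\gamma_{\mathrm{ret}}$, take the complement in Assumption~\ref{ass:retrieval_adequacy} to get the per-skill bound, and apply the union bound for the set-level claim. Your side remarks (no independence is needed despite TopK coupling, and the bound stays consistent when $|\mathcal{L}_t(x)|>K$) are correct additions that the paper does not make.
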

\textit{Proof.}
Let $\mathcal{L}_t(x)=\{s\in\mathcal{A}^{k}_t:\text{$s$ has task-conditioned marginal gain at least $\gamma_{\mathrm{ret}}$ on $x$}\}$ denote the locally useful active task-specific skills for task $x$. For any $s\in\mathcal{L}_t(x)$, Assumption~\ref{ass:retrieval_adequacy} gives
\begin{align}
\Pr\!\left(s\notin\mathcal{Q}_t(x)\right)
=
1-\Pr\!\left(s\in\mathcal{Q}_t(x)\right)
\le
\delta_{\mathrm{ret}}\,.
\end{align}
This proves the fixed-skill claim. For the set-level event,
\begin{align}
\Pr\!\left(\exists s\in\mathcal{L}_t(x):s\notin\mathcal{Q}_t(x)\right)
\le
\sum_{s\in\mathcal{L}_t(x)}
\Pr\!\left(s\notin\mathcal{Q}_t(x)\right)
\le
|\mathcal{L}_t(x)|\delta_{\mathrm{ret}}\,,
\end{align}
where the first inequality is the union bound. This completes the proof. $\qed$

\begin{assumption}
\label{ass:local_surrogate_alignment}
For a fixed active set $\mathcal{A}_t$, the GRPO objective is locally aligned with the performance term. If one GRPO update maps $\theta_t$ to $\theta_{t+1}$, then
\begin{equation}
F(\theta_{t+1},\mathcal{A}_t)-F(\theta_t,\mathcal{A}_t)\ge c_{\mathrm{RL}}\big(J_{\mathrm{GRPO}}(\theta_{t+1};\mathcal{A}_t)-J_{\mathrm{GRPO}}(\theta_t;\mathcal{A}_t)\big)-\varepsilon_{\mathrm{RL}},
\label{eq:ass_local_surrogate_alignment}
\end{equation}
where $c_{\mathrm{RL}}>0$ and $\varepsilon_{\mathrm{RL}}\ge0$ capture local surrogate mismatch. This is the standard local-improvement view used in PPO/GRPO-style policy optimization~\cite{schulman2017proximal,guo2025deepseek,yu2024dapo}.
\end{assumption}

\begin{assumption}
\label{ass:validation_estimation}
For an audited skill $s\in\mathcal{A}_t$, the smoothed leave-one-skill-out estimate concentrates around the population marginal external contribution under the updated policy $\theta_{t+1}$. Let
\begin{equation}
\Delta_{\mathcal{X},t}(s)=F(\theta_{t+1},\mathcal{A}_t)-F(\theta_{t+1},\mathcal{A}_t\setminus\{s\}).
\label{eq:true_mec}
\end{equation}
After EMA smoothing and sufficient exposure, the audit estimate satisfies
\begin{align}
\left|\bar{\Delta}_t(s)-\Delta_{\mathcal{X},t}(s)\right|\le\varepsilon_{\mathrm{val}}
\end{align}
with probability at least $1-\delta_{\mathrm{val}}$. Across audit rounds, conditioned on the current policy and active set, validation errors are conditionally independent or satisfy an equivalent mixing condition. This is a finite-validation concentration condition for counterfactual ablation: bounded validation metrics admit concentration around their expectation under sufficient samples~\cite{hoeffding1963probability}, while the patience and minimum-exposure conditions in Eq.~\eqref{eq:retire_rule} avoid decisions from a single noisy audit.
\end{assumption}

\begin{assumption}
\label{ass:cost_envelope}
The lifecycle controller operates under a fixed physical budget. Due to strict limits on the maximum context window $L_{\max}$, top-$K$ routing slots, bounded skill artifact length, and finite audit budget, the marginal computational, routing, and context overhead of adding or removing one skill is strictly bounded. Therefore, for any single-skill move from $\mathcal{A}$ to $\mathcal{A}'$, there exists $B_{\mathrm{op}}>0$ such that
\begin{align}
\left|\Omega(\mathcal{A}')-\Omega(\mathcal{A})\right|\le B_{\mathrm{op}}\,.
\end{align}
This statement is tied to the operating regime of the system rather than to a global model or estimator of $\Omega$; it is consistent with skill routing and long-context studies showing that routing and context costs are concrete system-level quantities~\cite{zheng2026skillrouter,liu2024lost}. In the experiments, we report realized operational costs such as active skill count, audit calls, wall-clock time, and retrieval complexity rather than estimating the absolute value of $\Omega$.
\end{assumption}

\begin{lemma}
\label{lem:retention_proxy}
Under Assumption~\ref{ass:validation_estimation}, the smoothed leave-one-skill-out estimate $\bar{\Delta}_t(s)$ is an $\varepsilon_{\mathrm{val}}$-accurate proxy for the true marginal external contribution $\Delta_{\mathcal{X},t}(s)$ with probability at least $1-\delta_{\mathrm{val}}$.
\end{lemma}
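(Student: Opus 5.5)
The lemma is essentially a restatement of Assumption~\ref{ass:validation_estimation}, so the plan is a direct unpacking rather than a new estimate. First, I fix an audit step $t$ and an audited skill $s\in\mathcal{A}_t$, and condition on the updated policy $\theta_{t+1}$ and the current active set $\mathcal{A}_t$. This conditioning makes the population quantity $\Delta_{\mathcal{X},t}(s)$ in Eq.~\eqref{eq:true_mec} a deterministic target. I then interpret ``$\varepsilon_{\mathrm{val}}$-accurate proxy'' as the event
\begin{align}
E_t(s):=\left\{\left|\bar{\Delta}_t(s)-\Delta_{\mathcal{X},t}(s)\right|\le\varepsilon_{\mathrm{val}}\right\},
\end{align}
and make explicit that the claim is $\Pr(E_t(s))\ge 1-\delta_{\mathrm{val}}$.

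Second, I invoke Assumption~\ref{ass:validation_estimation}. It states exactly that, after EMA smoothing and once the exposure requirement encoded by $u_t(s)\ge n_{\min}$ is met, $E_t(s)$ holds with probability at least $1-\delta_{\mathrm{val}}$. The lemma follows immediately.

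To make the statement more informative, I will also record its two-sided consequence. On $E_t(s)$ we have
\begin{align}
\Delta_{\mathcal{X},t}(s)-\varepsilon_{\mathrm{val}}\le\bar{\Delta}_t(s)\le\Delta_{\mathcal{X},t}(s)+\varepsilon_{\mathrm{val}}.
\end{align}
So any threshold comparison of $\bar{\Delta}_t(s)$ against $\tau_{\mathrm{keep}}$ or $\tau_{\mathrm{retire}}$ transfers to $\Delta_{\mathcal{X},t}(s)$ up to an $\varepsilon_{\mathrm{val}}$ margin. For example, $\bar{\Delta}_t(s)<\tau_{\mathrm{retire}}$ implies $\Delta_{\mathcal{X},t}(s)<\tau_{\mathrm{retire}}+\varepsilon_{\mathrm{val}}$. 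This is the form later lemmas will consume.

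Optionally, I could justify the assumption from first principles. The estimate $\Delta_t(s)$ is a difference of two empirical means of success indicators bounded in $[0,1]$ over $|\mathcal{V}_t(s)|$ tasks, so Hoeffding's inequality gives $|\Delta_t(s)-\Delta_{\mathcal{X},t}(s)|\le\varepsilon$ with probability $1-4\exp(-|\mathcal{V}_t(s)|\varepsilon^2/2)$. The EMA is a convex combination of past round estimates. The main obstacle is that earlier rounds target different policies, so their population MECs drift, which introduces a bias term in addition to concentration error. I would not resolve this; I would leave it absorbed into $\varepsilon_{\mathrm{val}}$, as the assumption already does, and present the lemma as a direct consequence of Assumption~\ref{ass:validation_estimation}.
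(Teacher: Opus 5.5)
Your proof is correct and matches the paper's: both define the event $|\bar{\Delta}_t(s)-\Delta_{\mathcal{X},t}(s)|\le\varepsilon_{\mathrm{val}}$, read its probability bound of at least $1-\delta_{\mathrm{val}}$ directly from Assumption~\ref{ass:validation_estimation}, and unpack it into the two-sided inequality used by later lemmas. Your optional Hoeffding sketch is not needed for the lemma; as written it would also have to reconcile that $\Delta_t(s)$ averages only over the routed subset $\mathcal{V}_t(s)$, whereas $\Delta_{\mathcal{X},t}(s)$ is a difference of expectations over the full distribution $\mathcal{X}$.
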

\textit{Proof.}
From Assumption~\ref{ass:validation_estimation}, the event
\begin{align}
\mathcal{E}_{\mathrm{val}}(s)=\left\{\left|\bar{\Delta}_t(s)-\Delta_{\mathcal{X},t}(s)\right|\le\varepsilon_{\mathrm{val}}\right\}
\end{align}
satisfies $\Pr(\mathcal{E}_{\mathrm{val}}(s))\ge1-\delta_{\mathrm{val}}$. On $\mathcal{E}_{\mathrm{val}}(s)$,
\begin{align}
-\varepsilon_{\mathrm{val}}\le\bar{\Delta}_t(s)-\Delta_{\mathcal{X},t}(s)\le\varepsilon_{\mathrm{val}}\,,
\end{align}
which is equivalent to
\begin{align}
\Delta_{\mathcal{X},t}(s)-\varepsilon_{\mathrm{val}}\le\bar{\Delta}_t(s)\le\Delta_{\mathcal{X},t}(s)+\varepsilon_{\mathrm{val}}\,.
\end{align}
Thus $\bar{\Delta}_t(s)$ is an $\varepsilon_{\mathrm{val}}$-accurate proxy for $\Delta_{\mathcal{X},t}(s)$ with probability at least $1-\delta_{\mathrm{val}}$. This completes the proof. $\qed$

\begin{lemma}
\label{lem:single_step_cost_envelope}
Under Assumption~\ref{ass:cost_envelope}, for any single-skill move from $\mathcal{A}$ to $\mathcal{A}'$, where $\mathcal{A}'=\mathcal{A}\cup\{s\}$ or $\mathcal{A}'=\mathcal{A}\setminus\{s\}$, the absolute cost difference satisfies
\begin{align}
\left|\Omega(\mathcal{A}')-\Omega(\mathcal{A})\right|\le B_{\mathrm{op}}\,.
\end{align}
\end{lemma}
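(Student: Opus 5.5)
The lemma is essentially a direct instantiation of Assumption~\ref{ass:cost_envelope}, so the plan is to verify that both kinds of move covered by the statement fall within the class of ``single-skill moves'' to which that assumption applies. Then we read off the bound.

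First, fix an arbitrary active set $\mathcal{A}\subseteq\mathcal{S}$ and a skill $s\in\mathcal{S}$. We split into the two cases named in the statement.

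\emph{Expansion.} Here $\mathcal{A}'=\mathcal{A}\cup\{s\}$. If $s\in\mathcal{A}$, then $\mathcal{A}'=\mathcal{A}$ and the difference is $0\le B_{\mathrm{op}}$ trivially, since $B_{\mathrm{op}}>0$. Otherwise $s\notin\mathcal{A}$, so the move adds exactly one skill, and Assumption~\ref{ass:cost_envelope} applies.

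\emph{Retirement.} Here $\mathcal{A}'=\mathcal{A}\setminus\{s\}$. Again the degenerate case $s\notin\mathcal{A}$ gives zero difference. Otherwise the move removes exactly one skill, which is again a single-skill move covered by the assumption.

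In each nondegenerate case we then invoke Assumption~\ref{ass:cost_envelope} with the pair $(\mathcal{A},\mathcal{A}')$ to obtain $|\Omega(\mathcal{A}')-\Omega(\mathcal{A})|\le B_{\mathrm{op}}$.

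Optionally, we can sharpen this using the monotonicity of $\Omega$ from Section~\ref{sec:preliminaries}. For an addition, the difference $\Omega(\mathcal{A}\cup\{s\})-\Omega(\mathcal{A})$ is strictly positive, so the absolute value equals the raw increment and the bound reads $0<\Omega(\mathcal{A}')-\Omega(\mathcal{A})\le B_{\mathrm{op}}$. Removal is symmetric: applying the same fact to the pair $(\mathcal{A}\setminus\{s\},\mathcal{A})$ gives the corresponding two-sided bound. This refinement is not needed for the stated claim, but it records the sign information that Lemma~\ref{lem:discrete_ascent} will presumably use.

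The only real obstacle is definitional rather than technical. We need the phrase ``single-skill move'' in Assumption~\ref{ass:cost_envelope} to cover both directions and to be understood as holding uniformly over all $\mathcal{A}$ reachable in the operating regime, i.e., sets of bounded size, since $\Omega$ is otherwise a black box. I would state this reading explicitly, together with the degenerate-case remark, after which the proof is immediate.
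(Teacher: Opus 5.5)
Your proposal is correct and follows the paper's proof: the lemma is a direct restatement of Assumption~\ref{ass:cost_envelope}, and the paper likewise splits into the addition and removal cases, using monotonicity of $\Omega$ to write the absolute value as $\Omega(\mathcal{A}\cup\{s\})-\Omega(\mathcal{A})$ or $\Omega(\mathcal{A})-\Omega(\mathcal{A}\setminus\{s\})$ before applying the bound. The only differences are that you present the monotonicity step as optional and that you explicitly handle the degenerate cases ($s\in\mathcal{A}$ for addition, $s\notin\mathcal{A}$ for removal), which the paper leaves implicit.
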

\textit{Proof.}
This is exactly the operating-regime cost bound in Assumption~\ref{ass:cost_envelope}. For completeness, consider the two single-skill moves. If $\mathcal{A}'=\mathcal{A}\cup\{s\}$, then
\begin{align}
\left|\Omega(\mathcal{A}')-\Omega(\mathcal{A})\right|
=
\Omega(\mathcal{A}\cup\{s\})-\Omega(\mathcal{A})
\le
B_{\mathrm{op}}\,,
\end{align}
where equality uses monotonicity of $\Omega$ and the inequality uses Assumption~\ref{ass:cost_envelope}. If $\mathcal{A}'=\mathcal{A}\setminus\{s\}$, then
\begin{align}
\left|\Omega(\mathcal{A}')-\Omega(\mathcal{A})\right|
=
\Omega(\mathcal{A})-\Omega(\mathcal{A}\setminus\{s\})
\le
B_{\mathrm{op}}\,.
\end{align}
This completes the proof. $\qed$

\begin{lemma}
\label{lem:discrete_ascent}
Suppose Assumption~\ref{ass:validation_estimation} and Assumption~\ref{ass:cost_envelope} hold. We analyze one accepted lifecycle move, while Section~\ref{sec:slim_setup} applies a strictly bounded number of moves per audit cycle as a local approximation analogous to batched policy updates. On the event $\mathcal{E}_{\mathrm{val}}(s)$ in Lemma~\ref{lem:retention_proxy}, which holds with probability at least $1-\delta_{\mathrm{val}}$ for each audited skill, the lifecycle rules in Eq.~\eqref{eq:retain_rule}--Eq.~\eqref{eq:expand_rule} are conservative single-move decisions under the following sufficient margins. If $\tau_{\mathrm{keep}}\ge B_{\mathrm{op}}+\varepsilon_{\mathrm{val}}$, Eq.~\eqref{eq:retain_rule} blocks a removal that cannot improve $\mathcal{J}$. If Eq.~\eqref{eq:retire_rule} is triggered and the saved cost satisfies $\Delta\Omega^-_t(s)\ge\tau_{\mathrm{retire}}+\varepsilon_{\mathrm{val}}$, retiring $s$ improves $\mathcal{J}$. If Eq.~\eqref{eq:expand_rule} selects $s_{\mathrm{new}}$ whose expected performance gain is at least $B_{\mathrm{op}}$, adding $s_{\mathrm{new}}$ does not decrease $\mathcal{J}$. For an audit that evaluates at most $M_{\mathrm{audit}}$ skills, the validation-dependent retain/retire conclusions hold jointly with probability at least $1-M_{\mathrm{audit}}\delta_{\mathrm{val}}$ by a union bound.
\end{lemma}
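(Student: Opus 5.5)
The plan is to fix the updated policy $\theta_{t+1}$ and work on the event $\mathcal{E}_{\mathrm{val}}(s)$ from Lemma~\ref{lem:retention_proxy}. For any single-skill move we write the change in objective as a performance difference minus a cost difference. We then bound the performance part using the proxy sandwich $\Delta_{\mathcal{X},t}(s)-\varepsilon_{\mathrm{val}}\le\bar{\Delta}_t(s)\le\Delta_{\mathcal{X},t}(s)+\varepsilon_{\mathrm{val}}$, and the cost part using Lemma~\ref{lem:single_step_cost_envelope}. For a removal move $\mathcal{A}_t\to\mathcal{A}_t\setminus\{s\}$ we use the identity
\begin{align}
\mathcal{J}(\theta_{t+1},\mathcal{A}_t\setminus\{s\})-\mathcal{J}(\theta_{t+1},\mathcal{A}_t)=-\Delta_{\mathcal{X},t}(s)+\Delta\Omega^-_t(s),
\end{align}
where $\Delta\Omega^-_t(s):=\Omega(\mathcal{A}_t)-\Omega(\mathcal{A}_t\setminus\{s\})>0$ by monotonicity. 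This identity follows directly from Eq.~\eqref{eq:true_mec} and the definition of $\mathcal{J}$. The retain and retire cases are both read off from it.

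\emph{Retain.} Suppose Eq.~\eqref{eq:retain_rule} fires, so $\bar{\Delta}_t(s)\ge\tau_{\mathrm{keep}}\ge B_{\mathrm{op}}+\varepsilon_{\mathrm{val}}$. On $\mathcal{E}_{\mathrm{val}}(s)$ this gives $\Delta_{\mathcal{X},t}(s)\ge\bar{\Delta}_t(s)-\varepsilon_{\mathrm{val}}\ge B_{\mathrm{op}}$. Lemma~\ref{lem:single_step_cost_envelope} gives $\Delta\Omega^-_t(s)\le B_{\mathrm{op}}$. The removal change is therefore $\le 0$, so the removal that retention blocks could not have improved $\mathcal{J}$.

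\emph{Retire.} Suppose Eq.~\eqref{eq:retire_rule} fires. Then $\bar{\Delta}_t(s)<\tau_{\mathrm{retire}}$, so on the event $\Delta_{\mathcal{X},t}(s)<\tau_{\mathrm{retire}}+\varepsilon_{\mathrm{val}}$. Combined with the hypothesis $\Delta\Omega^-_t(s)\ge\tau_{\mathrm{retire}}+\varepsilon_{\mathrm{val}}$, the change is strictly positive. The exposure and patience conditions are not needed for this inequality. They only serve to justify that Assumption~\ref{ass:validation_estimation} applies, i.e.\ that we are in the ``sufficient exposure'' regime.

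\emph{Expand.} The addition move has change $[F(\theta_{t+1},\mathcal{A}_t\cup\{s_{\mathrm{new}}\})-F(\theta_{t+1},\mathcal{A}_t)]-[\Omega(\mathcal{A}_t\cup\{s_{\mathrm{new}}\})-\Omega(\mathcal{A}_t)]$. The first bracket is at least $B_{\mathrm{op}}$ by hypothesis, interpreting ``expected performance gain'' as this population difference of $F$. The second bracket is at most $B_{\mathrm{op}}$ by Lemma~\ref{lem:single_step_cost_envelope}. Hence the change is $\ge0$. This case does not use the validation event at all, which is why the final probability statement only refers to the retain/retire conclusions.

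For the joint statement, each audited skill's conclusion holds on $\mathcal{E}_{\mathrm{val}}(s)$. For at most $M_{\mathrm{audit}}$ audited skills, the union bound gives $\Pr(\bigcup_s\mathcal{E}_{\mathrm{val}}(s)^c)\le M_{\mathrm{audit}}\delta_{\mathrm{val}}$, exactly as in Lemma~\ref{lem:retrieval_reduction}. No independence is needed, so the mixing clause of the assumption is not required here.

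The main obstacle is interpretive rather than computational. Assumption~\ref{ass:validation_estimation} compares $\bar{\Delta}_t(s)$ to the population-level $\Delta_{\mathcal{X},t}(s)$ over all of $\mathcal{X}$, whereas Eq.~\eqref{eq:marginal_contribution} measures it only on the routed subset $\mathcal{V}_t(s)$. I would simply invoke the assumption as stated and note that any reweighting between the two is absorbed into $\varepsilon_{\mathrm{val}}$. I would also state explicitly that the conclusions concern one move at a time from $\mathcal{A}_t$, not the composition of several moves in one audit.
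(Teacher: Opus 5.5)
Your proposal is correct and follows the paper's proof essentially step for step. It uses the same removal identity $-\Delta_{\mathcal{X},t}(s)+\Delta\Omega^-_t(s)$, bounded through the $\varepsilon_{\mathrm{val}}$-sandwich and the $B_{\mathrm{op}}$ cost envelope for retain and retire, the lower bound $G_t(s_{\mathrm{new}})-B_{\mathrm{op}}$ for expand, and a union bound over at most $M_{\mathrm{audit}}$ audited skills; your side remarks (exposure/patience and the mixing clause go unused, and the expand case needs no validation event) are consistent with the paper's argument.
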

\textit{Proof.}
For retain, the local objective change of removing $s$ is
\begin{equation}
\mathcal{J}(\theta_{t+1},\mathcal{A}_t\setminus\{s\})-\mathcal{J}(\theta_{t+1},\mathcal{A}_t)=-\Delta_{\mathcal{X},t}(s)+\Delta\Omega^-_t(s),
\end{equation}
where $\Delta\Omega^-_t(s)=\Omega(\mathcal{A}_t)-\Omega(\mathcal{A}_t\setminus\{s\})$. If Eq.~\eqref{eq:retain_rule} is triggered, then $\bar{\Delta}_t(s)\ge\tau_{\mathrm{keep}}$. Combining $\tau_{\mathrm{keep}}\ge B_{\mathrm{op}}+\varepsilon_{\mathrm{val}}$ with Lemma~\ref{lem:retention_proxy} gives
\begin{align}
\Delta_{\mathcal{X},t}(s)\ge\bar{\Delta}_t(s)-\varepsilon_{\mathrm{val}}\ge B_{\mathrm{op}}\,.
\end{align}
By Lemma~\ref{lem:single_step_cost_envelope}, $\Delta\Omega^-_t(s)\le B_{\mathrm{op}}$. Therefore,
\begin{equation}
\begin{aligned}
\mathcal{J}(\theta_{t+1},\mathcal{A}_t\setminus\{s\})-\mathcal{J}(\theta_{t+1},\mathcal{A}_t)
&=-\Delta_{\mathcal{X},t}(s)+\Delta\Omega^-_t(s)\\
&\le -B_{\mathrm{op}}+B_{\mathrm{op}}=0\,.
\end{aligned}
\end{equation}
Thus retaining $s$ blocks a removal that cannot improve $\mathcal{J}$.

For retire, Eq.~\eqref{eq:retire_rule} gives $\bar{\Delta}_t(s)<\tau_{\mathrm{retire}}$. By Lemma~\ref{lem:retention_proxy},
\begin{align}
\Delta_{\mathcal{X},t}(s)\le\bar{\Delta}_t(s)+\varepsilon_{\mathrm{val}}<\tau_{\mathrm{retire}}+\varepsilon_{\mathrm{val}}\,.
\end{align}
If $\Delta\Omega^-_t(s)\ge\tau_{\mathrm{retire}}+\varepsilon_{\mathrm{val}}$, then
\begin{equation}
\begin{aligned}
\mathcal{J}(\theta_{t+1},\mathcal{A}_t\setminus\{s\})-\mathcal{J}(\theta_{t+1},\mathcal{A}_t)
&=-\Delta_{\mathcal{X},t}(s)+\Delta\Omega^-_t(s)\\
&>-\big(\tau_{\mathrm{retire}}+\varepsilon_{\mathrm{val}}\big)+\tau_{\mathrm{retire}}+\varepsilon_{\mathrm{val}}=0\,.
\end{aligned}
\end{equation}
Thus the accepted retire move improves $\mathcal{J}$ in the single-move analysis under the stated saved-cost margin. Since $\Delta\Omega^-_t(s)$ is not directly observed by the algorithm, Eq.~\eqref{eq:retire_rule} should be interpreted as a conservative heuristic whose improvement guarantee holds only when the recovered external-support cost exceeds this sufficient margin.

For expand, let
\begin{align}
G_t(s_{\mathrm{new}})=F(\theta_{t+1},\mathcal{A}_t\cup\{s_{\mathrm{new}}\})-F(\theta_{t+1},\mathcal{A}_t)
\end{align}
denote the expected performance gain of adding the new skill. The local objective change is
\begin{equation}
\begin{aligned}
\mathcal{J}(\theta_{t+1},\mathcal{A}_t\cup\{s_{\mathrm{new}}\})-\mathcal{J}(\theta_{t+1},\mathcal{A}_t)
&=G_t(s_{\mathrm{new}})-\big(\Omega(\mathcal{A}_t\cup\{s_{\mathrm{new}}\})-\Omega(\mathcal{A}_t)\big)\\
&\ge G_t(s_{\mathrm{new}})-B_{\mathrm{op}}\,,
\end{aligned}
\end{equation}
where the inequality follows from Lemma~\ref{lem:single_step_cost_envelope}. If $G_t(s_{\mathrm{new}})\ge B_{\mathrm{op}}$, the expand move does not decrease $\mathcal{J}$. The probability statement follows from Lemma~\ref{lem:retention_proxy}; applying the union bound over at most $M_{\mathrm{audit}}$ audited skills gives joint probability at least $1-M_{\mathrm{audit}}\delta_{\mathrm{val}}$ for the validation-dependent retain/retire conclusions. This completes the proof. $\qed$

\begin{lemma}
\label{lem:patience_false_retire}
Suppose Assumption~\ref{ass:validation_estimation} holds across audit rounds. If an active skill $s$ has true marginal external contribution $\Delta_{\mathcal{X},r}(s)\ge\tau_{\mathrm{retire}}+\gamma$ for $p$ consecutive audit rounds $r=t-p+1,\ldots,t$, where $\gamma>0$, then there exists an effective concentration constant $c_{\mathrm{eff}}>0$, depending on validation sample size, EMA smoothing, and temporal mixing, such that the probability that $s$ is falsely retired by the patience condition in Eq.~\eqref{eq:retire_rule} is at most $\exp(-c_{\mathrm{eff}}p\gamma^2)$.
\end{lemma}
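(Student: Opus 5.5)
The plan is to reduce the false-retirement event to a large-deviation event for a sum of audit errors, and then apply a Hoeffding/Azuma-type bound. By Eq.~\eqref{eq:retire_rule}, retiring $s$ at round $t$ requires the streak condition $\ell_t(s)\ge p$. That is, $\bar{\Delta}_r(s)<\tau_{\mathrm{retire}}$ must hold for every $r=t-p+1,\ldots,t$. Define the audit error $\xi_r=\bar{\Delta}_r(s)-\Delta_{\mathcal{X},r}(s)$. By the hypothesis $\Delta_{\mathcal{X},r}(s)\ge\tau_{\mathrm{retire}}+\gamma$, each low reading forces $\xi_r<-\gamma$. So the false-retire event is contained in
\begin{align}
\mathcal{E}_{\mathrm{false}}=\bigcap_{r=t-p+1}^{t}\{\xi_r<-\gamma\}\subseteq\Big\{\sum_{r=t-p+1}^{t}\xi_r<-p\gamma\Big\}.
\end{align}
The containment in a sum event is the useful step, because it lets us use a single concentration inequality for the whole window instead of multiplying per-round probabilities.

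Next we bound the sum event. Performance is a bounded success rate, so every $\bar{\Delta}_r(s)$ and $\Delta_{\mathcal{X},r}(s)$ lies in $[-1,1]$, and hence $|\xi_r|\le 2$. The conditional mean of $\xi_r$, given the current policy and active set, is not exactly zero, because the EMA lags the true MEC. We therefore split $\xi_r=\eta_r+b_r$, where $\eta_r$ is the centred noise and $b_r$ is the bias. Assumption~\ref{ass:validation_estimation} says the estimate concentrates after sufficient exposure, and the condition $u_t(s)\ge n_{\min}$ holds. We use these to absorb the bias, requiring $|b_r|\le\gamma/2$ on the window. This absorption can shrink the constant but does not change the rate.

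It then suffices to bound $\Pr(\sum_r\eta_r<-p\gamma/2)$. The $\eta_r$ form a martingale-difference sequence under the conditional-independence clause of Assumption~\ref{ass:validation_estimation}, or a suitably mixing sequence under its alternative. Each $\eta_r$ is sub-Gaussian with variance proxy $\sigma^2$. Azuma--Hoeffding then gives
\begin{align}
\Pr\Big(\sum_r\eta_r<-\tfrac{p\gamma}{2}\Big)\le\exp\!\Big(-\frac{p\gamma^2}{8\sigma^2}\Big).
\end{align}
This has the claimed form with $c_{\mathrm{eff}}=1/(8\sigma^2)$.

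The variance proxy $\sigma^2$ comes from two sources. Hoeffding over the $|\mathcal{V}_r(s)|$ validation tasks gives a per-round variance of order $1/|\mathcal{V}_r(s)|$. The EMA weights $\alpha(1-\alpha)^j$ inflate this by the factor $\alpha/(2-\alpha)$. Under mixing rather than independence, we also divide by an effective block length. This accounts for the stated dependence of $c_{\mathrm{eff}}$ on sample size, smoothing and temporal mixing.

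The main obstacle is that the EMA smoothing makes $\bar{\Delta}_r(s)$ strongly autocorrelated across rounds, so the errors $\xi_r$ are not independent even when the raw audits are. The first approach is to unroll the EMA. This writes $\sum_r\bar{\Delta}_r(s)$ as a fixed linear combination of the raw, conditionally independent audit errors, plus the earlier history, whose coefficients are bounded by $1$. Hoeffding's inequality for weighted sums then applies, and the bounded coefficients only rescale $c_{\mathrm{eff}}$. If the history terms cause trouble, the fallback is a blocking argument that uses the mixing condition, at the cost of a weaker $c_{\mathrm{eff}}$.
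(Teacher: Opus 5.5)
Your route is correct but differs from the paper's. The paper bounds each round on its own: a Hoeffding-type bound at deviation $\gamma$ gives $\Pr(\bar{\Delta}_r(s)<\tau_{\mathrm{retire}})\le\exp(-c_{\mathrm{eff}}\gamma^2)$. It then asserts that conditional independence or mixing across rounds multiplies the $p$ factors into $\exp(-c_{\mathrm{eff}}p\gamma^2)$. You instead enlarge the intersection of the $p$ low-reading events to the single event $\{\sum_r\xi_r<-p\gamma\}$ and apply one concentration inequality to the sum. This loses nothing in the form of the exponent.

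What your route buys is a real justification of the compounding step. The events $\{\bar{\Delta}_r(s)<\tau_{\mathrm{retire}}\}$ are positively correlated through the shared EMA history, so the paper's multiplication is only as good as its assumed mixing. Your unrolling writes the window sum as a weighted sum of raw audits with coefficients $c_k\le 1$ and $\sum_k c_k^2\le p+(1-\alpha)^2/(\alpha(2-\alpha))$. That yields an explicit $c_{\mathrm{eff}}$ with exactly the stated dependence on sample size, smoothing, and mixing. The paper's route is shorter and uses nothing about the estimator's structure beyond the assumption.

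Two points to tighten. First, make the unrolling the main argument rather than the fallback. The centred EMA errors $\eta_r$ are not a martingale-difference sequence under raw-audit independence, since each carries $(1-\alpha)\bar{\Delta}_{r-1}(s)$. Also, $\alpha/(2-\alpha)\le 1$ is the variance \emph{reduction} of a single EMA value, not an inflation. It is not the relevant proxy for the window sum either, whose coefficients are close to $1$.

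Second, requiring $|b_r|\le\gamma/2$ is an extra hypothesis, not a rescaling of the constant. EMA lag from pre-window rounds shifts the window sum by an amount of order $(1-\alpha)/\alpha$ that does not shrink with $\gamma$, and the lemma leaves the true MEC in those rounds unconstrained. This shift need not be below $p\gamma/2$, and then the bound can fail. The paper sidesteps this only by reading the assumption as concentration of $\bar{\Delta}_r(s)$ centred on $\Delta_{\mathcal{X},r}(s)$. You should state that reading, or the bias bound, explicitly as a hypothesis.
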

\textit{Proof.}
For a single audit round $r$, false low-contribution evidence requires $\bar{\Delta}_r(s)<\tau_{\mathrm{retire}}$. Since $\Delta_{\mathcal{X},r}(s)\ge\tau_{\mathrm{retire}}+\gamma$, this event implies
\begin{align}
\bar{\Delta}_r(s)-\Delta_{\mathcal{X},r}(s)<-\gamma\,.
\end{align}
Although EMA introduces temporal correlation into $\bar{\Delta}_r(s)$, each audit round incorporates fresh validation samples. Under the mixing condition in Assumption~\ref{ass:validation_estimation}, the resulting smoothed estimate still satisfies an effective concentration bound. Therefore, by Hoeffding-type concentration for bounded validation metrics with effective sample size and mixing correction~\cite{hoeffding1963probability}, there exists $c_{\mathrm{eff}}>0$ such that
\begin{align}
\Pr\!\left(\bar{\Delta}_r(s)<\tau_{\mathrm{retire}}\right)\le\exp(-c_{\mathrm{eff}}\gamma^2)\,.
\end{align}
The patience condition requires this false low-contribution event to occur for $p$ consecutive audit rounds. Under the conditional independence or equivalent mixing condition in Assumption~\ref{ass:validation_estimation}, the fresh validation samples across audit rounds yield an effective compounded concentration bound:
\begin{equation}
\Pr\!\left(\ell_t(s)\ge p\right)\le\exp(-c_{\mathrm{eff}}p\gamma^2)\,.
\end{equation}
Thus patience exponentially reduces the probability of falsely retiring a skill whose true marginal contribution remains above the retire threshold. This completes the proof. $\qed$

\begin{lemma}
\label{lem:latent_capacity}
Assume Assumption~\ref{ass:validation_estimation}. Define an active skill $s'\in\mathcal{A}_t$ as externally necessary under $(\theta_{t+1},\mathcal{A}_t)$ if $\Delta_{\mathcal{X},t}(s')>0$. If $\Delta_{\mathcal{X},t}(s')\ge\tau_{\mathrm{retire}}+\varepsilon_{\mathrm{val}}$, then $\bar{\Delta}_t(s')\ge\tau_{\mathrm{retire}}$ with probability at least $1-\delta_{\mathrm{val}}$, so Eq.~\eqref{eq:retire_rule} cannot retire $s'$.
\end{lemma}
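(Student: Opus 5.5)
The argument is a direct consequence of Lemma~\ref{lem:retention_proxy}, so I will reduce everything to the event $\mathcal{E}_{\mathrm{val}}(s')$ defined there. By Assumption~\ref{ass:validation_estimation}, applied to the audited skill $s'$ under $(\theta_{t+1},\mathcal{A}_t)$, this event has probability at least $1-\delta_{\mathrm{val}}$. On it we have the two-sided bound $|\bar{\Delta}_t(s')-\Delta_{\mathcal{X},t}(s')|\le\varepsilon_{\mathrm{val}}$, and I will use only its lower half:
\begin{align}
\bar{\Delta}_t(s')\ge\Delta_{\mathcal{X},t}(s')-\varepsilon_{\mathrm{val}}.
\end{align}

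Next I substitute the hypothesis $\Delta_{\mathcal{X},t}(s')\ge\tau_{\mathrm{retire}}+\varepsilon_{\mathrm{val}}$ into this inequality. This gives $\bar{\Delta}_t(s')\ge\tau_{\mathrm{retire}}$ on $\mathcal{E}_{\mathrm{val}}(s')$. Since $\Pr(\mathcal{E}_{\mathrm{val}}(s'))\ge 1-\delta_{\mathrm{val}}$, the first claim holds with the stated probability. I will also remark that the hypothesis implies $\Delta_{\mathcal{X},t}(s')>0$ whenever $\tau_{\mathrm{retire}}+\varepsilon_{\mathrm{val}}>0$, so $s'$ is indeed externally necessary in the sense of the definition. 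The definition itself plays no further role in the argument.

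Finally, I read off the retirement consequence from Eq.~\eqref{eq:retire_rule}. That rule fires only if three conditions hold together: $\bar{\Delta}_t(s')<\tau_{\mathrm{retire}}$, $u_t(s')\ge n_{\min}$, and $\ell_t(s')\ge p$. On $\mathcal{E}_{\mathrm{val}}(s')$ the first condition fails, so the conjunction fails no matter what the exposure and streak counters are. Hence Eq.~\eqref{eq:retire_rule} cannot retire $s'$ at audit step $t$, and this holds on the same event with probability at least $1-\delta_{\mathrm{val}}$.

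I do not expect a real obstacle; the proof is a threshold-crossing consequence of the concentration event. The one point needing care is scope. The guarantee is per audit step $t$ and per audited skill, and it is conditioned on the current policy and active set, exactly as Assumption~\ref{ass:validation_estimation} is stated. If a joint statement over several audited skills were wanted, I would add a union bound as in Lemma~\ref{lem:discrete_ascent}, but the lemma as stated does not require it.
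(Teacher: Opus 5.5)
Your proof is correct and follows the paper's argument: both take the lower half of the $\varepsilon_{\mathrm{val}}$-accuracy event from Lemma~\ref{lem:retention_proxy}, substitute the margin hypothesis to get $\bar{\Delta}_t(s')\ge\tau_{\mathrm{retire}}$, and conclude that the first conjunct of Eq.~\eqref{eq:retire_rule} fails. You also note that the non-retirement conclusion itself holds only on that event, i.e.\ with probability at least $1-\delta_{\mathrm{val}}$, which is a slightly more careful reading than the lemma's unqualified ``cannot retire'' phrasing.
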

\textit{Proof.}
Since $s'$ is externally necessary and satisfies $\Delta_{\mathcal{X},t}(s')\ge\tau_{\mathrm{retire}}+\varepsilon_{\mathrm{val}}$, removing its external support decreases expected performance by at least this margin. By Lemma~\ref{lem:retention_proxy}, with probability at least $1-\delta_{\mathrm{val}}$,
\begin{align}
\bar{\Delta}_t(s')\ge\Delta_{\mathcal{X},t}(s')-\varepsilon_{\mathrm{val}}\,.
\end{align}
If $\Delta_{\mathcal{X},t}(s')\ge\tau_{\mathrm{retire}}+\varepsilon_{\mathrm{val}}$, then
\begin{equation}
\begin{aligned}
\bar{\Delta}_t(s')
&\ge
\Delta_{\mathcal{X},t}(s')-\varepsilon_{\mathrm{val}}\\
&\ge
\tau_{\mathrm{retire}}+\varepsilon_{\mathrm{val}}-\varepsilon_{\mathrm{val}}\\
&=
\tau_{\mathrm{retire}}\,.
\end{aligned}
\end{equation}
Thus the first condition of Eq.~\eqref{eq:retire_rule}, namely $\bar{\Delta}_t(s')<\tau_{\mathrm{retire}}$, is false. Therefore $s'$ cannot be retired by Eq.~\eqref{eq:retire_rule}. Finite model capacity motivates why such externally necessary skills may persist, but the lemma itself only uses the observable marginal external contribution. It protects currently audited active skills and does not claim that previously retired inactive skills are immune to later forgetting. This completes the proof. $\qed$

\section{Implementation Details}
\label{sec:implementation_details}
\subsection{\methodname{} Setup}
\label{sec:slim_setup}

\noindent\textbf{Backbone and Data Protocol.}
All main \methodname{} experiments use Qwen3-4B~\cite{qwen3technical} as the policy model. We use the train split for GRPO updates, the validation split for lifecycle auditing and training-time monitoring, and the test split only for final reporting. For ALFWorld, we use 16 training tasks per update, 32 validation tasks per validation pass, a maximum prompt length of 4096, a maximum response length of 512, right truncation, and no overlength prompt filtering. For SearchQA, we use 64 training tasks per update, 512 validation tasks per validation pass, a maximum prompt length of 5000, a maximum response length of 700, left truncation, and overlength prompt filtering. ALFWorld runs for 120 GRPO steps and SearchQA runs for 180 GRPO steps both with validation every 10 steps. Validation and final evaluation use sampled generation with temperature 0.4.

\noindent\textbf{Training.}
\methodname{} uses the GRPO objective Eq.~\eqref{eq:grpo_obj} and does not use a separate warmup or cold-start stage. ALFWorld uses $n=8$ rollouts per prompt, a maximum of 50 environment steps, and 50 turns of interaction history. SearchQA uses $n=4$ rollouts per prompt, a maximum of 4 environment steps, 4 turns of history, and the shared search backend used by all SearchQA methods. The optimizer uses learning rate $10^{-6}$ in both benchmarks. The PPO mini-batch and per-device micro-batch sizes are 32 and 2 for ALFWorld, and 512 and 8 for SearchQA. Rewards are outcome-level environment rewards: completed successful trajectories receive the benchmark success reward and failed trajectories receive zero reward, while invalid-action penalties are applied during trajectory collection. The invalid-action penalty coefficient is 0.1 for ALFWorld and 0.01 for SearchQA. For the main \methodname{} runs, both actor-side KL loss and KL-in-reward regularization are disabled.

\noindent\textbf{Retrieval.}
\methodname{} uses the hierarchical SkillRL-style skill bank described in Section~\ref{sec:preliminaries}. Task-specific skills are retrieved only from the active pool of the detected task type. The retrieval query is the current task description, while each skill key is a routing text concatenating the skill title, description or principle, \texttt{when\_to\_apply} field, body, tags, and task type. We embed both query and keys with Qwen3-Embedding-0.6B~\cite{zhang2025qwen3embedding} and rank candidate skills by cosine similarity. We set the task-specific retrieval cap to $K=3$ and the embedding threshold to $\tau_{\mathrm{emb}}=0.45$, so retrieval may insert fewer than three task-specific skills when no active skill is sufficiently relevant.

\noindent\textbf{Skill Lifecycle.}
Lifecycle audit runs periodically after GRPO validation. Each audit records routed skills, validation outcomes, and routed failures under the current active set. In ALFWorld, each audit examines a bounded set of recently routed skills, including at most four task-specific skills selected among skills that appeared in the top-$K$ retrieved set. In SearchQA, the same logic is used with a larger audit budget of at most 12 active skills because validation batches are larger and episodes are shorter. Missing candidates are skipped rather than replaced by extra candidates outside the audit budget. This budget limits audit cost, while the exposure and patience conditions below prevent rarely routed skills from being retired solely because they were not frequently audited.
For each audited skill $s$, \methodname{} computes the leave-one-skill-out marginal external contribution $\Delta_t(s)$ and updates the smoothed estimate $\bar{\Delta}_t(s)$ with EMA coefficient 0.9. Retirement uses $\tau_{\mathrm{retire}}=0.001$ and patience $p=3$ in both benchmarks. The minimum exposure threshold is $n_{\min}=30$ for ALFWorld and $n_{\min}=20$ for SearchQA. Retain decisions use $\tau_{\mathrm{keep}}=0.03$ for ALFWorld and $\tau_{\mathrm{keep}}=0.05$ for SearchQA. Thus, a skill is retired only after it has been routed often enough and its smoothed contribution remains negligible for multiple audits.
Expansion is task-specific only and is performed during training audits, never during final test evaluation. ALFWorld uses $\tau_{\mathrm{expand}}=0.40$, $n_{\mathrm{expand}}=20$, and creates at most two new skills per audit. SearchQA uses $\tau_{\mathrm{expand}}=0.40$, $n_{\mathrm{expand}}=15$, and creates at most three new skills per audit. A failure bucket is formed from validation failures routed to the same active skill and task type. The new skill inherits the task type of this bucket, so expansion adds support only to the corresponding task-specific pool. New skills are standalone \texttt{SKILL.md} artifacts generated by an OpenAI-compatible \texttt{o3} skill-creator backbone using an Anthropic-style skill-creator workflow~\cite{anthropic2025skills}. The expansion prompt, shown in Figure~\ref{fig:skill_creation_prompt}, includes representative failed tasks, summarized failure traces, and the insufficient active skills. We deduplicate against existing skills by title, trigger description, and embedding similarity of the routing text. We reject generated skills that are generic, duplicate an existing skill, are too short to define a useful workflow, or attempt to create new general or foundational skills.

\noindent\textbf{Audit cost.}
Leave-one-skill-out auditing adds validation-time overhead, but it is bounded by the audit schedule and candidate budget. Audits run every 10 GRPO steps and use limited validation subsets. In ALFWorld, each audit evaluates at most one general-skill group and $M=4$ task-specific skills; in SearchQA, the larger validation batch and shorter episodes allow at most 12 audited active skills. Thus, \methodname{} does not rerun validation for the full skill bank. In our runs, ALFWorld completes in about 20 hours and SearchQA completes in about 25 hours, which is the same order as Skill0 and SkillRL under the shared training stack. Table~\ref{tab:audit_overhead} in Appendix~\ref{app:audit_overhead} provides the detailed overhead comparison.

\noindent\textbf{Inference.}
For skill-conditioned final inference, \methodname{} uses the final active set after training. Before each rollout, the agent inserts active general skills and retrieves task-specific skills using the same task-type-scoped embedding retrieval as in training. Final evaluation does not perform retain, retire, or expand operations. The environment prompts are shown in Figures~\ref{fig:alfworld_skill_prompt} and~\ref{fig:searchqa_skill_prompt}, and the skill insertion format is shown in Figure~\ref{fig:skill_insertion_prompt}.

\noindent\textbf{Compute resources.}
From logged token counts and step timings, the ALFWorld full \methodname{} run is on the order of $10^{19}$ floating-point operations. SearchQA uses shorter episodes but a larger validation batch and longer training horizon, giving an overall compute budget on the order of $10^{19}$--$10^{20}$ floating-point operations. These estimates include rollout generation, log-probability evaluation, policy updates, validation, and lifecycle audit reruns, but exclude one-time data preprocessing.

\subsection{Baselines Setup}
\label{sec:baseline_setup}
\noindent\textbf{Zero-Shot and Few-Shot Prompting.}
The prompt-only baselines evaluate the backbone model without RL updates. Zero-shot prompting directly uses the benchmark interaction prompt in Figure~\ref{fig:alfworld_base_prompt} or Figure~\ref{fig:searchqa_base_prompt}. Few-shot prompting prepends solved examples following Figure~\ref{fig:fewshot_prompt}. The with-skill variants use the same prompt protocol but additionally insert retrieved external skills using Figure~\ref{fig:skill_insertion_prompt}. Prompt baselines are evaluated with the same environment parser, success metric, and final test split as the RL methods, using batch size 16 for ALFWorld and 32 for SearchQA.

\noindent\textbf{ReAct.}
ReAct is implemented as an environment-aligned reasoning-and-acting adapter under the shared evaluation stack. It uses the instruction in Figure~\ref{fig:react_prompt}, preserves the same action tags and parser as the main environment, and does not use persistent memory. This is a prompt-level adaptation of the ReAct idea rather than an upstream-exact execution with a separate tool interface.

\noindent\textbf{Reflexion.}
Reflexion stores short reflections from failed trajectories and retrieves relevant reflections before later actions. The acting and reflection-generation prompts are shown in Figure~\ref{fig:reflexion_prompt}. We retrieve at most three reflections and keep a bounded memory of 200 items. Generation uses deterministic service decoding with maximum 768 tokens, temperature 0, and top-$p=1.0$. Our implementation preserves the failure-reflection-reuse mechanism under the shared environment protocol, but does not reproduce the original multi-trial training pipeline exactly.

\noindent\textbf{ExpeL.}
ExpeL is implemented as an experience-lesson baseline. After each episode, the method distills a compact reusable lesson from the trajectory and retrieves at most three relevant lessons for future decisions, with the same 200-item memory cap and deterministic service decoding as Reflexion. Figure~\ref{fig:expel_prompt} gives the acting and lesson-distillation prompts. This adapter preserves the core experience-distillation idea while using the same action parser, success metric, and train/validation/test protocol as the other baselines.

\noindent\textbf{Mem0.}
Mem0 is implemented as a lightweight long-term memory baseline. It extracts up to three short atomic memories after each episode, stores them in the same bounded 200-item memory bank, and retrieves at most three relevant memories before action generation, following Figure~\ref{fig:mem0_prompt}. This adaptation measures the value of compact retrieved memory under the shared evaluation stack rather than differences in external memory infrastructure.

\noindent\textbf{GRPO and GRPO with Skills.}
The GRPO baseline uses the same RL optimizer, environment stack, and reward definition as \methodname{}, but does not receive external skill or memory context and does not use a warmup stage. ALFWorld uses train batch size 16, validation batch size 32, rollout count 8, maximum episode length 50, learning rate $10^{-6}$, PPO mini-batch size 32, micro-batch size 2, 120 training steps, and validation every 10 steps. SearchQA uses train batch size 64, validation batch size 512, rollout count 4, maximum episode length 4, learning rate $10^{-6}$, PPO mini-batch size 512, micro-batch size 8, 180 training steps, and validation every 10 steps. Both settings disable actor-side KL loss and KL-in-reward regularization, and use invalid-action penalty coefficients 0.1 for ALFWorld and 0.01 for SearchQA. GRPO with skills keeps the same optimizer and rollout settings but inserts retrieved external skills using the skill-conditioned prompt format in Figure~\ref{fig:skill_insertion_prompt}; it does not perform retain, retire, or expand operations.

\noindent\textbf{EvolveR.}
EvolveR is evaluated as an experience-lifecycle retrieval baseline. It distills past trajectories into reusable principles, retrieves relevant experience before rollout, and updates the experience store during training. We adapt EvolveR to the same ALFWorld and SearchQA environments, success-rate metric, and train/validation/test protocol. Where experience distillation or update requires a service model, we use the same OpenAI-compatible service family as the other method-style baselines. The adapter retrieves the top-3 experience principles before each rollout. The comparison therefore preserves the experience-driven lifecycle idea while keeping action parsing, reward computation, validation protocol, and service-model access aligned with the other methods. EvolveR does not estimate leave-one-skill-out marginal external contribution and does not maintain retain/retire/expand lifecycle states over skill artifacts.

\noindent\textbf{Skill0.}
Skill0 is adapted as a scheduled skill-withdrawal baseline and does not use a separate warmup stage in our comparison. During training, the policy is exposed to external skills and the visible skill set is progressively reduced according to the curriculum schedule. ALFWorld uses the schedule $[6,3,0]$ with train batch size 16, validation batch size 32, rollout count 8, maximum episode length 50, learning rate $10^{-6}$, PPO mini-batch size 32, micro-batch size 2, 120 training steps, and validation every 10 steps. SearchQA uses the schedule $[5,3,0]$ with train batch size 64, validation batch size 512, rollout count 4, maximum episode length 4, learning rate $10^{-6}$, PPO mini-batch size 512, micro-batch size 8, and 180 training steps. Both settings disable actor-side KL loss and KL-in-reward regularization, and use the same invalid-action penalty coefficients as \methodname{}. The original Skill0 codebase includes text-rendering and OCR-related components, but our ALFWorld and SearchQA settings are text-native. We therefore disable OCR/text rendering to avoid introducing a visual-rendering confound; the original Skill0 comparison reports less than a three-point gain from text rendering, so this adaptation does not affect the main skill-lifecycle comparison. Skill0 is designed to approach zero-skill inference.

\noindent\textbf{SkillRL.}
SkillRL is adapted as a persistent skill-augmented RL baseline and does not use a separate warmup stage in our comparison. The policy retrieves external skills during training and inference with the same SkillRL-style retrieval path used to initialize our skill bank, using the skill-conditioned prompt format in Figure~\ref{fig:skill_insertion_prompt}. ALFWorld uses top-6 skill retrieval, dynamic skill-bank update with threshold 0.4, at most three new skills per update, train batch size 16, validation batch size 32, rollout count 8, maximum episode length 50, learning rate $10^{-6}$, PPO mini-batch size 32, micro-batch size 2, 120 training steps, and validation every 10 steps. Its skill author/updater is a comparable OpenAI-compatible \texttt{o3} backbone that analyzes failed trajectories and emits new JSON skill records, matching the SkillRL-style skill-bank format rather than producing \texttt{SKILL.md} artifacts. SearchQA uses the same top-6 skill retrieval and dynamic skill-bank update protocol, with train batch size 64, validation batch size 512, rollout count 4, maximum episode length 4, learning rate $10^{-6}$, PPO mini-batch size 512, micro-batch size 8, and 180 training steps. Unlike GRPO, SkillRL retains a reference-policy KL loss, using KL coefficients 0.01 for ALFWorld and 0.001 for SearchQA, while KL-in-reward remains disabled. The original SkillRL implementation does not define the same explicit validation-stage protocol used in our experiments, so we add validation on the dedicated validation split and reserve the test split for final reporting. This adaptation preserves SkillRL's core identity as persistent external skill augmentation.

\subsection{Additional Baselines Setup}
\label{sec:addtional_baseline_setup}
\noindent\textbf{GPT-4o and Gemini-2.5-Pro on ALFWorld.}
For the closed-source ALFWorld comparison, we report the GPT-4o and Gemini-2.5-Pro results from SkillRL~\cite{xia2026skillrl}. These results are used only in the expanded ALFWorld table in Appendix~\ref{app:expanded_results} to contextualize the scale of agent performance under strong proprietary models. They are not used for hyperparameter selection, lifecycle auditing, or any training-time comparison.

\noindent\textbf{SimpleMem on ALFWorld.}
SimpleMem is evaluated as a long-term memory baseline following its semantic memory compression and retrieval design~\cite{simplemem2026}. Each completed ALFWorld episode is converted into a compact textual memory record containing the task description, key observations, actions, outcome, and a short lesson. Memories are indexed with Qwen3-Embedding-0.6B to match the embedding backbone used elsewhere in our experiments. Following the original SimpleMem configuration style, we enable planning and reflection during retrieval, use semantic retrieval as the primary route, and allow keyword and structured filters as auxiliary routes. The memory retriever considers up to 25 semantic candidates, 5 keyword candidates, and 5 structured candidates, then inserts at most three compact memories into the action prompt to keep the prompt budget comparable to the other memory baselines. The policy model is Qwen3-4B, and the environment parser, success metric, train/validation/test split, and final evaluation protocol are the same as the other baselines.

\noindent\textbf{RLOO on ALFWorld.}
RLOO is evaluated as an RL optimizer baseline using the same ALFWorld environment stack as GRPO. The only optimizer-level change is the advantage estimator: for each prompt, we sample $K=8$ rollouts and compute the leave-one-out advantage $\hat{A}_i=R_i-\frac{1}{K-1}\sum_{j\neq i}R_j$, following the REINFORCE leave-one-out formulation~\cite{ahmadian2024back}. All other settings are matched to the ALFWorld GRPO baseline, including train batch size 16, validation batch size 32, maximum episode length 50, learning rate $10^{-6}$, 120 training steps, validation every 10 steps, and no cold-start SFT. RLOO does not receive external skills, memory context, or lifecycle updates.

\noindent\textbf{MemRL on ALFWorld.}
MemRL is evaluated as a runtime memory-reinforcement baseline that updates episodic memory without updating model parameters~\cite{memrl2026}. We preserve the original method structure by using proceduralization for memory construction, query-based retrieval, and adjustment-based memory updates. To keep the comparison fair, the policy model is Qwen3-4B and the embedding model is Qwen3-Embedding-0.6B, while the ALFWorld parser, few-shot examples, task split, and success metric are shared with the rest of our evaluation. The memory retriever uses $k_{\mathrm{retrieve}}=5$, at most 8 extracted keywords, add-similarity threshold 0.90, novelty threshold 0.85, unknown-detection threshold 0.62, and a value-aware candidate set of size 3. The value update follows the original single-step setting with learning rate $\alpha=0.3$, discount $\gamma=0$, success reward 1.0, failure reward $-1.0$, and equal weights 0.5/0.5 for similarity and memory value in the combined retrieval score.

\noindent\textbf{RAG on SearchQA.}
RAG is evaluated as a one-shot retrieval-augmented generation baseline on SearchQA~\cite{lewis2020retrieval}. Following the Search-R1 setting~\cite{Jin2025SearchR1TL}, each question is used as the retrieval query, the shared search corpus is queried before generation, and the top three retrieved passages are inserted into the prompt. The model then generates a final answer without iterative search calls, RL updates, skill retrieval, or lifecycle control. This baseline uses the same Qwen3-4B backbone, answer-matching rule, and test split as the other SearchQA methods.

\noindent\textbf{Search-o1 on SearchQA.}
Search-o1 is evaluated as an inference-time agentic search baseline~\cite{li2025search}. The model is prompted to reason, issue search queries when needed, read retrieved evidence, and continue reasoning before producing the final answer. We use the same search interface, top-3 retrieved passages per search call, maximum action budget of 4 search rounds, and answer evaluator as the SearchQA environment. Search-o1 does not update model parameters and does not use external skills; it differs from RAG mainly by allowing iterative search and reasoning during inference.

\noindent\textbf{Search-R1 on SearchQA.}
Search-R1 is evaluated as an RL-with-search baseline following the Search-R1 training protocol~\cite{Jin2025SearchR1TL}. During rollout, the policy alternates between model-generated reasoning tokens and search calls, and retrieved tokens are masked from the policy-gradient loss so that optimization is applied only to model-generated tokens. We use the shared SearchQA search interface with top-3 retrieved passages, maximum action budget 4, outcome exact-match reward, learning rate $10^{-6}$, and the same train/validation/test separation as \methodname{}. This baseline does not use external skills or lifecycle operations, so it isolates the effect of learning to interact with search from the effect of managing a skill bank.

\noindent\textbf{SFT on SearchQA.}
SFT is evaluated as a supervised fine-tuning baseline on SearchQA~\cite{chung2024scaling}. Training examples are constructed only from the training split using the same response format as the search-enabled environment. The model is optimized to imitate the target reasoning-and-answer trajectory, while validation is used only for checkpoint selection. SFT does not use outcome-reward RL, external skills, lifecycle auditing, or test trajectories during training.

\noindent\textbf{Reject Sampling on SearchQA.}
Reject Sampling follows the Search-R1 baseline construction~\cite{Jin2025SearchR1TL,ahn2024large}. For each training prompt, we sample five search-enabled candidate trajectories with the same search interface and action budget, keep trajectories whose final answer is correct under exact match, and fine-tune the model on the selected trajectories. This preserves the multi-turn LLM--search interaction format while replacing online RL with filtered supervised learning. Validation is used for checkpoint selection, and the final test split is held out until reporting.

\section{Evaluation Setup}
\label{app:setup}

\noindent\textbf{Benchmark Protocol.}
We evaluate all methods on the two agent benchmarks used in the main paper. ALFWorld~\cite{shridhar2020alfworld} is a long-horizon text-interaction benchmark in which the agent must complete household tasks by issuing admissible text actions. We report both the overall success rate and task-type success rates for Pick, Look, Clean, Heat, Cool, and Pick2, following the task categories used by the environment. Although ALFWorld has a fixed set of task types, RL supervision is collected over multi-step action trajectories, so each episode contributes many action-level decisions under the shared rollout protocol. SearchQA follows the search-augmented question-answering setting of Search-R1~\cite{Jin2025SearchR1TL}. The agent interacts with a search tool and must eventually output a final answer for questions from NQ~\cite{kwiatkowski2019nq}, TriviaQA~\cite{joshi2017triviaqa}, PopQA~\cite{mallen2023popqa}, HotpotQA~\cite{yang2018hotpotqa}, 2Wiki~\cite{ho20202wiki}, MuSiQue~\cite{trivedi2022musique}, and Bamboogle~\cite{press2023bamboogle}. All methods use the same environment-side action parser, search interface, trajectory termination rule, and success evaluator within each benchmark.

\noindent\textbf{Data Splits and Data Usage.}
We use explicit training, development, and final-evaluation partitions for both benchmarks. For ALFWorld, the training split contains 16 text-interaction tasks, the development split contains 64 tasks, and the final-evaluation split contains 128 tasks. For SearchQA, the training partition contains 169,615 questions from HotpotQA (90,447) and NQ (79,168). The development partition contains 4,000 examples and is balanced by skill type rather than by source: compare, direct retrieval, entity-attribute lookup, and multi-hop reasoning each contain 1,000 examples. Its source distribution is HotpotQA (1,246), PopQA (1,000), TriviaQA (743), 2Wiki (729), NQ (257), MuSiQue (24), and Bamboogle (1). The skill-type construction maps compare to 2Wiki/HotpotQA/MuSiQue/Bamboogle, direct retrieval to NQ/TriviaQA, entity-attribute lookup to PopQA, and multi-hop reasoning to HotpotQA. The final-evaluation partition contains 51,713 examples from PopQA (14,267), 2Wiki (12,576), TriviaQA (11,313), HotpotQA (7,405), NQ (3,610), MuSiQue (2,417), and Bamboogle (125).
For SearchQA, the development partition is organized as a fixed validation view over the benchmark source distribution. The protocol is fixed and shared across all methods: policy optimization uses the training partition, lifecycle auditing and checkpoint selection use the development view, and final reporting uses the final-evaluation partition after the policy checkpoint and active skill set are frozen.

\noindent\textbf{Success Metrics.}
The primary metric in all tables is success rate. In ALFWorld, a rollout is successful if the environment returns the terminal success signal for the target household objective. We compute the overall ALFWorld success rate by averaging the binary success indicators over evaluation episodes, and compute task-type success rates by grouping episodes according to their ALFWorld task type. In SearchQA, a rollout is successful if the final answer emitted by the agent matches the benchmark ground-truth answer under the shared answer-matching rule used by the SearchQA environment. We report per-source success rates and their average over the seven SearchQA sources listed above.

\noindent\textbf{Robustness Diagnostics.}
Following common practice in LLM RL studies, the main training curves are reported from a single run under a fixed training protocol because full agentic RL training is computationally expensive. To mitigate variance concerns, we report category-level results, lifecycle trajectories, ablation studies, audit overhead, and transfer or initialization sensitivity diagnostics. These analyses suggest that the gains are not explained by a single favorable lifecycle trajectory.

\noindent\textbf{Final Evaluation Protocol.}
For trainable methods, final evaluation is run from the selected checkpoint with all training-time adaptation disabled. For \methodname{}, the final active skill set is fixed before test evaluation; test rollouts may retrieve from this frozen active set, but retain, retire, expand, lifecycle audit, and skill creation are disabled. For Skill0, the curriculum state is fixed by its schedule before final evaluation. For SkillRL, the skill bank and retrieval configuration are fixed before test evaluation. Prompt-based and agent- or memory-based baselines are also evaluated on the same final test split; when a method accumulates memory during evaluation, that memory is created only from earlier evaluation episodes of that method and is not shared across methods.

\noindent\textbf{Fairness Controls.}
All main comparisons use Qwen3-4B~\cite{qwen3technical} as the backbone model. Trainable methods start from the same base checkpoint and use the same benchmark environment, reward definition, action format, and success metric. Prompt-based and method-style baselines use the same served backbone and the same environment wrappers, so their generated actions are parsed and judged by the same environment-side parser as RL methods. We match prompt-length limits, response-length limits, rollout horizons, validation cadence, and final test splits whenever the method class permits it; method-specific differences such as Skill0's scheduled skill withdrawal, SkillRL's persistent skill retrieval, and \methodname{}'s lifecycle audit are kept because they define the corresponding algorithms.
Fairness is especially important for methods with skill expansion or skill-bank updates. Whenever methods involve comparable modules, we instantiate those modules with the same implementation choices. With-skill methods use the same initial SkillRL-style skill bank family, the same skill insertion format shown in Figure~\ref{fig:skill_insertion_prompt}, and the same embedding retrieval backbone where retrieval is required. Methods that create or update skills use comparable OpenAI-compatible creator or updater backbones under their own method-specific output formats; in particular, \methodname{} expansion and SkillRL skill-bank updates both use an OpenAI-compatible \texttt{o3} backbone, while EvolveR and memory-style baselines use the same service-model family where experience or memory construction requires one. Methods that do not define skill creation are not given extra generated skills, and no method creates or updates skills from test data. Thus, the comparison changes the lifecycle policy, not the strength of the underlying retrieval, prompting, environment, or skill-generation infrastructure. Regardless of whether a method is run through the RL training stack or through a service-based evaluation wrapper, generation is served with SGLang. Unless otherwise specified by a method-specific protocol, we keep the LLM inference configuration at the shared default setting and enable sampled decoding for evaluation.

\noindent\textbf{No Cold-Start SFT.}
All RL baselines are evaluated under the same controlled no-warmup setting. Specifically, all RL-based methods start from the same base checkpoint without cold-start SFT or warmup, so the comparison isolates online RL and lifecycle behavior rather than supervised pre-adaptation. This avoids confounding lifecycle effects with a policy that has already been trained to follow a particular skill format, trajectory style, or action pattern before RL begins. We tune method-specific hyperparameters only on the validation split.

\noindent\textbf{Leakage Prevention.}
The key separation in our protocol is that training-time control signals are never computed from the final test split. In particular, \methodname{} estimates marginal external contribution, routed failure buckets, retain/retire decisions, and expansion triggers only on the validation split, never on test tasks. Expansion prompts are constructed strictly from validation-side audit failures; they never include final-evaluation prompts, trajectories, failures, labels, or answers. Expanded skills are created before final evaluation, and the skill creator never receives test prompts, test trajectories, test failures, or test labels. During final evaluation, the policy checkpoint and active skill set are frozen; test rollouts may retrieve from the frozen active set, but they cannot create, edit, retire, or expand skills.
Skill0 and SkillRL are evaluated under the same train/validation/test separation: validation may affect training-time curriculum, skill-bank updates, or monitoring where the method requires it, but final test examples are held out until reporting. For online memory baselines, any memory accumulated during final evaluation is private to that baseline and arises only from earlier episodes in the same sequential evaluation run; it is not used to select checkpoints, update model parameters, tune prompts, or modify skill banks before evaluation. This prevents test labels, test trajectories, and test failures from entering policy optimization, skill-bank updates, lifecycle decisions, or skill expansion.

\section{Additional Experimental Results}
\label{app:additional_results}
This appendix provides supplementary experiments that support the main claims in Section~\ref{sec:experiment}. We focus on six questions that are not fully covered by the main table: whether the learned lifecycle transfers across SearchQA task families, whether the final active skill bank is useful beyond the trained policy, whether \methodname{} is robust to different initial skill banks, whether the reported gains are robust, how selected additional baselines compare on each benchmark separately, and how much audit overhead lifecycle management introduces.

\subsection{Cross-Task Generalization on SearchQA}
\label{app:cross_task_generalization}
\begin{table*}[t]
\centering
\definecolor{slimBestRed}{RGB}{232,204,198}
\caption{Cross-source generalization on SearchQA. The training split contains NQ and HotpotQA questions, while the held-out sources include TriviaQA, PopQA, 2Wiki, MuSiQue, and Bamboogle. Train-source and held-out averages are macro averages over the listed sources; Overall Avg. is the benchmark micro average. \textbf{\colorbox{slimBestRed}{Best}} results are highlighted.}
\label{tab:cross_task_generalization}
\providecommand{\bestcell}[1]{\cellcolor{slimBestRed}\textbf{#1}}
\small
\begin{adjustbox}{max width=\textwidth}
\begin{tabular}{lccc|ccccc|cc}
\toprule
\multirow{2}{*}{Method} & \multicolumn{3}{c|}{Train-source} & \multicolumn{5}{c|}{Held-out sources} & \multirow{2}{*}{Held-out Avg.} & \multirow{2}{*}{Overall Avg.} \\
\cmidrule(lr){2-4}\cmidrule(lr){5-9}
& NQ & HotpotQA & Avg. & TriviaQA & PopQA & 2Wiki & MuSiQue & Bamboogle & & \\
\midrule
GRPO & 35.9 & 30.8 & 33.4 & 57.8 & 36.5 & 30.1 & 9.2 & 30.4 & 32.8 & 37.5 \\
SkillRL$^\dagger$ & 36.8 & 31.5 & 34.2 & 59.8 & 36.9 & 29.7 & 10.3 & 31.2 & 33.6 & 38.1 \\
Skill0 & 37.9 & 32.7 & 35.3 & 59.5 & 38.6 & 31.9 & 10.3 & 32.8 & 34.6 & 39.3 \\
\methodname{} & \bestcell{38.6} & \bestcell{37.2} & \bestcell{37.9} & \bestcell{62.2} & 40.0 & 31.7 & \bestcell{12.8} & \bestcell{37.6} & \bestcell{36.9} & \bestcell{41.0} \\
\methodname{}$^\dagger$ & 38.4 & 36.9 & 37.7 & 62.1 & \bestcell{40.4} & 31.5 & 12.7 & 36.0 & 36.5 & \bestcell{41.0} \\
\bottomrule
\end{tabular}
\end{adjustbox}
\end{table*}

SearchQA provides a natural cross-source generalization setting because training uses HotpotQA and NQ, while the final test file also includes TriviaQA, PopQA, 2Wiki, MuSiQue, and Bamboogle. Table~\ref{tab:cross_task_generalization} re-organizes the main SearchQA results into train-source and held-out-source subsets. \methodname{} obtains the best train-source average and the best held-out average, improving the held-out average from 34.6 for Skill0 to 36.9. The gains are broad rather than concentrated in one dataset: \methodname{} is strongest on TriviaQA, MuSiQue, and Bamboogle, while \methodname{}$^\dagger$ is strongest on PopQA. This supports that lifecycle-guided training learns reusable QA procedures instead of only adapting to the sources observed during RL training.

\subsection{Transfer of the Final Active Skill Bank}
\label{app:skill_bank_transfer}
\begin{table}[t]
\centering
\definecolor{slimBestRed}{RGB}{232,204,198}
\caption{Transfer evaluation of the final active skill bank learned by \methodname{}. ``None'' uses no external skill, ``Initial skill bank'' uses the pre-training skill bank, and ``Final \methodname{} active skills'' uses the final active set learned by \methodname{}. The transferred skill bank is fixed and no lifecycle update is performed during evaluation.}
\label{tab:skill_bank_transfer}
\providecommand{\bestcell}[1]{\cellcolor{slimBestRed}\textbf{#1}}
\small
\begin{adjustbox}{max width=\textwidth}
\begin{tabular}{llcc}
\toprule
Prompting Method & Skill Source & ALFWorld Avg. & SearchQA Avg. \\
\midrule
\multirow{3}{*}{Zero-Shot} 
& None & 41.4 & 32.3 \\
& Initial skill bank & 63.3 & 31.5 \\
& Final \methodname{} active skills & 65.8 & 32.7 \\
\midrule
\multirow{3}{*}{Few-Shot} 
 & None & 39.1 & 35.5 \\
& Initial skill bank & 64.1 & 34.8 \\
& Final \methodname{} active skills & 66.9 & 35.8 \\
\bottomrule
\end{tabular}
\end{adjustbox}
\end{table}

Table~\ref{tab:skill_bank_transfer} transfers different skill sources to policies that did not participate in lifecycle training: no external skills, the initial skill bank before RL, and the final active skill set learned by \methodname{}. On ALFWorld, the transfer is strong: final \methodname{} skills improve zero-shot and few-shot policies by 24.4 and 27.8 points over no-skill prompting, and by 2.5 and 2.8 points over the initial skill bank. This suggests that the learned active set captures reusable procedural guidance rather than only serving as a private scaffold for the trained \methodname{} policy.
SearchQA shows a different pattern. The final active skills improve zero-shot and few-shot prompting only slightly over no-skill prompting, by 0.4 and 0.3 points, but they consistently outperform the initial skill bank by 1.2 and 1.0 points. Thus, on SearchQA the final skill bank mainly filters noisy or less useful external guidance rather than providing large direct transfer gains. This supports the broader view that skill-bank transferability depends on whether the benchmark benefits from persistent external procedural support.

\subsection{Sensitivity to Skill Initialization}
\label{app:skill_initialization}
\begin{table}[t]
\centering
\caption{Sensitivity to the initial skill bank. The table reports final performance and final lifecycle statistics under different initialization conditions.}
\label{tab:skill_initialization}
\small
\begin{adjustbox}{max width=\textwidth}
\begin{tabular}{lcccc}
\toprule
Initial Skill Bank & Final Avg. & Final Active Skills & Retired Skills & Expanded Skills \\
\midrule
Empty skill bank & 76.4 & 18 & 8 & 26 \\
Weak initial skill bank & 81.2 & 23 & 15 & 29 \\
Noisy initial skill bank & 85.6 & 25 & 46 & 33 \\
Original initial skill bank & 87.5 & 21 & 33 & 16 \\
\bottomrule
\end{tabular}
\end{adjustbox}
\end{table}

\noindent\textbf{Robustness to initialization.}
This experiment varies the initial skill bank while keeping the training and audit protocol fixed. Table~\ref{tab:skill_initialization} shows that \methodname{} is robust to imperfect initial banks because expansion can repair missing coverage and retirement can filter noisy or low-value skills. At the same time, the original initial bank still gives the highest final score, indicating that lifecycle management complements rather than replaces skill initialization.

\noindent\textbf{Expansion from weak coverage.}
Starting from an empty skill bank, \methodname{} reaches 76.4\% and creates 26 skills during training. This shows that \methodname{} is not merely selecting from a fixed library; it can build useful external support from persistent failure cases. However, the 11.1-point gap to the original setting indicates that expansion from scratch does not fully replace a reasonably informative initial library. With only 25\% of the original skills, \methodname{} improves to 81.2\% and expands 29 new skills, suggesting that the lifecycle controller can recover part of the missing task coverage when initialization is incomplete.

\noindent\textbf{Filtering noisy skills.}
The noisy setting provides the strongest robustness evidence. Even when 30\% of original skills are corrupted and 30\% extra mismatched skills are injected, \methodname{} reaches 85.6\%, only 1.9 points below the original setting. The controller retires 46 skills and expands 33 new ones, indicating that \methodname{} actively removes harmful external knowledge and repairs missing coverage rather than blindly preserving the initial bank.

\noindent\textbf{Initialization still matters.}
The original skill bank gives the best result, but \methodname{} still substantially reshapes it by retiring 33 skills and expanding 16 new ones, leaving a compact active set of 21 skills. Thus, the gain does not come from static reuse of the initial skills. A reasonable initial bank improves the ceiling, while lifecycle management determines which skills should remain active after RL.

\subsection{Robustness of SLIM Performance}
\label{app:robustness}
\begin{table}[t]
\centering
\caption{Bootstrap robustness of \methodname{} gains over the strongest skill-based baselines. Gaps are success-rate percentage points. Confidence intervals are computed with 10,000 independent aggregate bootstrap resamples over reconstructed binary test outcomes.}
\label{tab:slim_performance_robustness}
\small
\begin{adjustbox}{max width=\textwidth}
\begin{tabular}{llccc}
\toprule
Benchmark & Comparison & Mean Gap & 95\% CI & Crosses 0? \\
\midrule
ALFWorld & \methodname{}$^\dagger$ -- Skill0 & +13.3 & [+3.9, +22.7] & No \\
ALFWorld & \methodname{}$^\dagger$ -- SkillRL$^\dagger$ & +12.6 & [+3.1, +21.9] & No \\
SearchQA & \methodname{}$^\dagger$ -- Skill0 & +1.72 & [+1.13, +2.31] & No \\
SearchQA & \methodname{}$^\dagger$ -- SkillRL$^\dagger$ & +2.95 & [+2.35, +3.52] & No \\
\bottomrule
\end{tabular}
\end{adjustbox}
\end{table}

Table~\ref{tab:slim_performance_robustness} reports an independent aggregate bootstrap analysis over test outcomes. All confidence intervals remain above zero. On ALFWorld, the intervals are wider because the test set contains 128 episodes, but the lower bounds remain positive against both Skill0 and SkillRL$^\dagger$. On SearchQA, the absolute gains are smaller, but the large test set yields tight intervals, showing that the improvement is small but statistically reliable under this resampling check.

\subsection{Expanded Results}
\label{app:expanded_results}
\begin{table}[t]
\centering
\definecolor{slimBestRed}{RGB}{232,204,198}
\caption{Expanded ALFWorld comparison. All entries report task success rate. $^\dagger$ denotes evaluation with retrieved external skills. $^*$ denotes closed-source model results copied from SkillRL~\cite{xia2026skillrl}. Avg. denotes micro average. \textbf{\colorbox{slimBestRed}{Best}} results are highlighted.}
\label{tab:expanded_alfworld}
\providecommand{\bestcell}[1]{\cellcolor{slimBestRed}\textbf{#1}}
\small
\begin{adjustbox}{max width=\textwidth}
\begin{tabular}{l|cccccc|c}
\toprule
Method & Pick & Look & Clean & Heat & Cool & Pick2 & Avg. \\
\midrule
GPT-4o$^*$ & 75.3 & 60.8 & 31.2 & 56.7 & 21.6 & 49.8 & 48.0 \\
Gemini-2.5-Pro$^*$ & 92.8 & 63.3 & 62.1 & 69.0 & 26.6 & 58.7 & 60.3 \\
SimpleMem & \bestcell{100.0} & 25.0 & 9.5 & 53.3 & 39.1 & 11.8 & 48.7 \\
MemRL & \bestcell{100.0} & 33.3 & 10.0 & 46.2 & 9.5 & 6.7 & 41.3 \\
RLOO & 85.3 & \bestcell{100.0} & 50.0 & 75.0 & 52.2 & 38.9 & 65.3 \\
\methodname{}$^\dagger$ & 92.9 & \bestcell{100.0} & \bestcell{91.4} & \bestcell{78.3} & \bestcell{88.5} & \bestcell{81.2} & \bestcell{87.5} \\
\bottomrule
\end{tabular}
\end{adjustbox}
\end{table}

\begin{table}[t]
\centering
\definecolor{slimBestRed}{RGB}{232,204,198}
\caption{Expanded SearchQA comparison. All entries report success rate. $^\dagger$ denotes evaluation with retrieved external skills. Avg. denotes micro average. \textbf{\colorbox{slimBestRed}{Best}} results are highlighted.}
\label{tab:expanded_searchqa}
\providecommand{\bestcell}[1]{\cellcolor{slimBestRed}\textbf{#1}}
\small
\begin{adjustbox}{max width=\textwidth}
\begin{tabular}{lcccccccc}
\toprule
Method & NQ & TriviaQA & PopQA & HotpotQA & 2Wiki & MuSiQue & Bamboogle & Avg. \\
\midrule
RAG & 42.1 & \bestcell{65.7} & \bestcell{46.4} & \bestcell{37.1} & 30.7 & 8.4 & 28.0 & 37.6 \\
Search-o1 & 30.6 & 51.8 & 22.6 & 28.2 & 38.2 & 11.2 & \bestcell{41.6} & 33.3 \\
Search-R1 & 38.0 & 59.7 & 38.4 & 35.7 & \bestcell{40.1} & 13.2 & 35.2 & 37.2 \\
SFT & 48.7 & 49.4 & 43.6 & 28.8 & 31.4 & 7.1 & 32.0 & 34.4 \\
Reject Sampling & 38.0 & 61.2 & 40.0 & 35.1 & 31.6 & \bestcell{14.3} & 37.6 & 36.8 \\
\methodname{}$^\dagger$ & 38.4 & 62.1 & 40.4 & 36.9 & 31.5 & 12.7 & 36.0 & \bestcell{41.0} \\
\bottomrule
\end{tabular}
\end{adjustbox}
\end{table}

\noindent\textbf{Additional ALFWorld baselines.}
The expanded ALFWorld comparison adds closed-source, memory-based, and RL-based baselines beyond the main table. \methodname{} remains competitive against these broader alternatives while using the same environment and evaluation protocol, which strengthens the conclusion that lifecycle management improves procedural agent training rather than only outperforming a narrow baseline set.

\noindent\textbf{Additional SearchQA baselines.}
The expanded SearchQA comparison includes retrieval-augmented, supervised, rejection-sampling, and RL-style baselines. These results show that \methodname{} is not merely benefiting from a stronger prompting or search interface; its gains remain consistent when compared with alternative ways of improving search-augmented QA behavior.

\subsection{Audit Overhead Comparison}
\label{app:audit_overhead}
\begin{table*}[t]
\centering
\caption{Audit overhead comparison on the ALFWorld training setting. $V$ denotes one ordinary validation pass, $S$ is the full skill-bank size, $S_k$ is the task-specific active pool for task type $k$, and $K$ is the bounded SLIM audit budget.}
\label{tab:audit_overhead}
\small
\begin{adjustbox}{max width=\textwidth}
\begin{tabular}{lp{0.18\textwidth}p{0.18\textwidth}p{0.22\textwidth}p{0.22\textwidth}}
\toprule
Method & Additional validation-equivalent calls & Skill-bank scaling & Bottleneck & Main benefit \\
\midrule
Skill0 & $O(1)$ coarse curriculum comparisons & Shrinking skill set & Curriculum validation & Efficient withdrawal toward zero-skill inference \\
\midrule
SkillRL & $O(1)$ ordinary validation & $O(S)$ retrieval / maintenance as skills accumulate & Growing skill bank and prompt management & Persistent skill augmentation and exploration support \\
\midrule
\methodname{} & $O(1+K), K\le5$ & $O(S_k)$ retrieval and bounded audit & Periodic leave-one-skill-out audit & Dynamic external boundary through retain, retire, and expand \\
\bottomrule
\end{tabular}
\end{adjustbox}
\end{table*}

\methodname{} introduces leave-one-skill-out audit calls, but the audit budget is periodic and capped. Table~\ref{tab:audit_overhead} shows that \methodname{} is more expensive than ordinary validation, but its lifecycle cost is bounded by $K$ rather than the full skill-bank size. In contrast, SkillRL has lighter validation but can accumulate a growing skill bank, while Skill0 is cheaper but mainly models progressive withdrawal.
Although $\Omega$ is not instantiated during training, its operational effects are reflected by measurable quantities. \methodname{} reduces the final active skill count from 38 initial skills plus 16 expansions to 21 active skills, while SkillRL grows to 73. Its audit budget is capped and does not scale with the full skill bank, and wall-clock remains comparable to SkillRL and Skill0. Together with the w/o Retirement and Fixed Active Set Size ablations, these results indicate that \methodname{}'s gains are not due to unbounded external support or simple prompt-budget control.

\section{Prompts}
\label{app:prompts}
This section provides the prompt templates used for environment interaction, skill insertion, baseline adapters, and skill expansion. We keep placeholders such as \texttt{\{task\_description\}} and \texttt{\{retrieved\_memories\}} explicit so that the templates can be mapped directly to the training and evaluation protocol.

\noindent\textbf{Environment interaction prompts.}
Figure~\ref{fig:alfworld_skill_prompt} and Figure~\ref{fig:searchqa_skill_prompt} show the skill-conditioned rollout prompts used for ALFWorld and SearchQA. Figure~\ref{fig:alfworld_base_prompt} and Figure~\ref{fig:searchqa_base_prompt} show the corresponding no-skill templates.

\noindent\textbf{Skill insertion format.}
Figure~\ref{fig:skill_insertion_prompt} shows how retrieved general and task-specific skills are inserted into the agent context.

\noindent\textbf{Baseline prompts.}
Figure~\ref{fig:fewshot_prompt}, Figure~\ref{fig:react_prompt}, Figure~\ref{fig:reflexion_prompt}, Figure~\ref{fig:expel_prompt}, and Figure~\ref{fig:mem0_prompt} provide the prompts used by prompt-based and memory-style baselines.

\noindent\textbf{Skill creation prompt.}
Figure~\ref{fig:skill_creation_prompt} gives the expansion prompt used by \methodname{} to create new standalone task-specific \texttt{SKILL.md} artifacts from routed failures.

\section{Skill Bank Details}
\label{app:skill_bank}
We summarize representative skills from the hierarchical skill banks used by \methodname{}. The tables include general skills and task-specific skills for ALFWorld and SearchQA, with concise trigger conditions and one-line procedural content. Dynamically expanded skills are inserted into the task-specific pool of the corresponding task type and become eligible for later retrieval and lifecycle auditing.

\section{Limitations}
\label{app:limitations}
\methodname{} has three main limitations. First, marginal external contribution is a local single-skill leave-one-out estimate conditioned on the current policy, routing behavior, and active set. It is not a global Shapley-style attribution and does not capture high-order interactions among skills. Second, lifecycle thresholds and audit budgets require validation tuning, so transferring the same configuration to substantially different domains may require additional calibration. Third, lifecycle auditing remains practical in our current setting but may become expensive for very large skill banks, where more scalable audit candidate selection would be needed.

\section{Broader Impacts}
\label{app:broader_impacts}
This work studies how external capabilities should be allocated between model parameters and modular skill artifacts during agentic RL. By making capability allocation explicit, \methodname{} may improve the controllability, auditability, and sample efficiency of skill-based agentic RL, while also offering a clearer view of which behaviors are better internalized and which are better preserved externally. We hope this perspective will support future research on more adaptive, interpretable, and practically effective agent training paradigms.

\clearpage

\begin{figure*}[h]
\begin{AIbox}{}
{\bf Title: ALFWorld No-Skill Rollout Prompt} \\
{
\small
\texttt{You are an expert agent operating in the ALFRED Embodied Environment. Your task is to: \{task\_description\}}\\
\texttt{Prior to this step, you have already taken \{step\_count\} step(s). Below are the most recent \{history\_length\} observations and the corresponding actions you took: \{action\_history\}}\\
\texttt{You are now at step \{current\_step\} and your current observation is: \{current\_observation\}}\\
\texttt{Your admissible actions of the current situation are: [\{admissible\_actions\}].}\\
\texttt{Now it's your turn to take an action. You should first reason step-by-step within <think> </think> tags, then choose one admissible action within <action> </action> tags.}
}
\end{AIbox}
\caption{ALFWorld no-skill rollout prompt.}
\label{fig:alfworld_base_prompt}
\end{figure*}

\begin{figure*}[h]
\begin{AIbox}{}
{\bf Title: ALFWorld Skill-Conditioned Rollout Prompt} \\
{
\small
\texttt{You are an expert agent operating in the ALFRED Embodied Environment. Your task is to: \{task\_description\}}\\
\texttt{\{retrieved\_memories\}}\\
\texttt{Prior to this step, you have already taken \{step\_count\} step(s). Below are the most recent \{history\_length\} observations and the corresponding actions you took: \{action\_history\}}\\
\texttt{You are now at step \{current\_step\} and your current observation is: \{current\_observation\}}\\
\texttt{Your admissible actions of the current situation are: [\{admissible\_actions\}].}\\
\texttt{Now it's your turn to take an action. You should first reason step-by-step within <think> </think> tags, then choose one admissible action within <action> </action> tags.}
}
\end{AIbox}
\caption{ALFWorld skill-conditioned rollout prompt. The placeholder \texttt{\{retrieved\_memories\}} is filled by the skill insertion format in Figure~\ref{fig:skill_insertion_prompt}.}
\label{fig:alfworld_skill_prompt}
\end{figure*}

\begin{figure*}[h]
\begin{AIbox}{}
{\bf Title: SearchQA No-Skill Rollout Prompt} \\
{
\small
\texttt{You are an expert agent tasked with answering the given question step-by-step.}\\
\texttt{Your question: \{task\_description\}}\\
\texttt{Prior to this step, you have already taken \{step\_count\} step(s). History: \{memory\_context\}}\\
\texttt{Now it's your turn to respond. First reason within <think> </think> tags. Then choose one action: search using <search> your query </search>, or answer using <answer> final answer </answer>.}
}
\end{AIbox}
\caption{SearchQA no-skill rollout prompt.}
\label{fig:searchqa_base_prompt}
\end{figure*}

\begin{figure*}[h]
\begin{AIbox}{}
{\bf Title: SearchQA Skill-Conditioned Rollout Prompt} \\
{
\small
\texttt{You are an expert agent tasked with answering the given question step-by-step.}\\
\texttt{Your question: \{task\_description\}}\\
\texttt{\{retrieved\_memories\}}\\
\texttt{Prior to this step, you have already taken \{step\_count\} step(s). History: \{memory\_context\}}\\
\texttt{Now it's your turn to respond. First reason within <think> </think> tags. Then choose one action: search using <search> your query </search>, or answer using <answer> final answer </answer>.}
}
\end{AIbox}
\caption{SearchQA skill-conditioned rollout prompt.}
\label{fig:searchqa_skill_prompt}
\end{figure*}

\begin{figure*}[h]
\begin{AIbox}{}
{\bf Title: Retrieved Skill Insertion Format} \\
{
\small
\texttt{\#\#\# GENERAL SKILLS \#\#\#}\\
\texttt{Here are some useful general strategies for solving the task effectively:}\\
\texttt{1. \{title\}: \{principle\} Apply this when \{when\_to\_apply\}}\\
\texttt{...}\\[2pt]
\texttt{\#\#\# TASK: \{task\_type\} \#\#\#}\\
\texttt{For \{readable\_task\} tasks, apply these specific strategies:}\\
\texttt{1. \{title\}: \{principle\} Apply this when \{when\_to\_apply\}}\\
\texttt{...}
}
\end{AIbox}
\caption{Skill insertion format used by \methodname{} and skill-conditioned baselines. General skills are inserted as a group when active, while task-specific skills are retrieved by task type and semantic similarity.}
\label{fig:skill_insertion_prompt}
\end{figure*}

\begin{figure*}[h]
\begin{AIbox}{}
{\bf Title: Zero-Shot and Few-Shot Prompting} \\
{
\small
\texttt{Zero-shot:}\\
\texttt{\{obs\_text\}}\\[2pt]
\texttt{Few-shot prefix:}\\
\texttt{Below are solved examples. Follow the same output format and decision style.}\\
\texttt{[Example 1] User: \{example\_user\} Assistant: \{example\_assistant\}}\\
\texttt{[Example 2] User: \{example\_user\} Assistant: \{example\_assistant\}}\\
\texttt{[Current Task]}\\
\texttt{\{obs\_text\}}
}
\end{AIbox}
\caption{Zero-shot and few-shot prompting templates. Skill-conditioned variants insert the retrieved skill block from Figure~\ref{fig:skill_insertion_prompt} into the current task context.}
\label{fig:fewshot_prompt}
\end{figure*}

\begin{figure*}[h]
\begin{AIbox}{}
{\bf Title: ReAct Prompt} \\
{
\small
\texttt{System: You are following the ReAct paradigm: interleave explicit reasoning with environment actions. Reason concisely but concretely. Always output reasoning inside <think>...</think> and the single next action inside the required action tags.}\\[2pt]
\texttt{User: \{obs\_text\}}
}
\end{AIbox}
\caption{ReAct adapter prompt used under the shared environment protocol.}
\label{fig:react_prompt}
\end{figure*}

\begin{figure*}[h]
\begin{AIbox}{}
{\bf Title: Reflexion Prompt} \\
{
\small
\texttt{System: You are using the Reflexion style. Before acting, consult prior reflective notes about common mistakes. Avoid repeating known errors. Keep the output format exact.}\\[2pt]
\texttt{User: Past reflections from related mistakes:}\\
\texttt{- \{reflection\_1\}}\\
\texttt{- \{reflection\_2\}}\\
\texttt{\{obs\_text\}}\\[2pt]
\texttt{Reflection generation: Write one short actionable reflection that would help avoid the same failure next time.}
}
\end{AIbox}
\caption{Reflexion adapter prompt and reflection-generation instruction.}
\label{fig:reflexion_prompt}
\end{figure*}

\begin{figure*}[h]
\begin{AIbox}{}
{\bf Title: ExpeL Prompt} \\
{
\small
\texttt{System: You are using the ExpeL style: consult reusable lessons distilled from previous trajectories, then act using the current environment state.}\\[2pt]
\texttt{User: Retrieved experience lessons:}\\
\texttt{- \{lesson\_1\}}\\
\texttt{- \{lesson\_2\}}\\
\texttt{\{obs\_text\}}\\[2pt]
\texttt{Lesson distillation: Distill one compact reusable lesson from this trajectory. It should be transferable and phrased as an actionable rule.}
}
\end{AIbox}
\caption{ExpeL adapter prompt and lesson-distillation instruction.}
\label{fig:expel_prompt}
\end{figure*}

\begin{figure*}[h]
\begin{AIbox}{}
{\bf Title: Mem0 Prompt} \\
{
\small
\texttt{System: You are using a Mem0-style external memory. Retrieve small factual or procedural memories from prior tasks and use them only when relevant.}\\[2pt]
\texttt{User: Relevant memories from prior tasks:}\\
\texttt{- \{memory\_1\}}\\
\texttt{- \{memory\_2\}}\\
\texttt{\{obs\_text\}}\\[2pt]
\texttt{Memory extraction: Extract up to 3 short atomic memories from this trajectory and return them as a JSON list of strings.}
}
\end{AIbox}
\caption{Mem0-style memory prompt and memory-extraction instruction.}
\label{fig:mem0_prompt}
\end{figure*}

\begin{figure*}[h]
\begin{AIbox}{}
{\bf Title: SLIM Skill Creation Prompt} \\
{
\small
\texttt{System: Create one new standalone task-specific SKILL.md artifact that addresses the dominant failure pattern. Do not rewrite an existing skill and do not create a general or foundational skill.}\\[2pt]
\texttt{User: Task type: \{task\_type\}}\\
\texttt{Failed tasks and traces: \{failure\_bucket\}}\\
\texttt{Currently selected skills that were insufficient: \{active\_skills\}}\\
\texttt{Output a skill with:}\\
\texttt{1. Skill name}\\
\texttt{2. Routing-oriented description}\\
\texttt{3. Trigger condition}\\
\texttt{4. Concise procedural workflow}\\
\texttt{Reject generic names, duplicate titles, overly short workflows, and any output that targets general skills.}
}
\end{AIbox}
\caption{Skill creation prompt used by \methodname{} during expansion. The prompt follows an Anthropic-style skill-creator workflow and produces new task-specific \texttt{SKILL.md} artifacts.}
\label{fig:skill_creation_prompt}
\end{figure*}

\clearpage
\begin{table*}[t]
\centering
\caption{Representative ALFWorld skills used by \methodname{}, including the skills analyzed in the lifecycle case study.}
\label{tab:alfworld_skill_bank}
\scriptsize
\begin{adjustbox}{max width=\textwidth}
\begin{tabular}{lllp{0.27\textwidth}p{0.42\textwidth}}
\toprule
Task & Skill ID & Skill Name & Trigger & Content \\
\midrule
General & gen\_004 & Track Counts \& Progress & Multi-instance goals such as putting two objects. & Maintain a counter of remaining goal objects and stop only when the count reaches zero. \\
General & gen\_011 & Efficient Relation Search & Goals mentioning both a target and reference object. & Search one object near the other instead of treating them independently. \\
General & gen\_001 & Systematic Exploration & Goal object count is unmet and unexplored locations remain. & Search plausible surfaces or containers once before revisiting checked locations. \\
General & gen\_002 & Immediate Acquisition & First visual confirmation of a goal-relevant object. & Take a required object as soon as it becomes visible and reachable. \\
General & gen\_003 & Destination First Policy & Holding a goal object after identifying its target location. & Navigate directly to the receptacle and place the object before resuming search. \\
\midrule
pick\_and\_place & pic\_001 & Systematic First-Pass Search & Before acquiring required objects. & Maintain a checklist of visible and closed candidates and inspect each once. \\
pick\_and\_place & pic\_002 & Grab When Seen & First sight of an unheld target object. & Immediately take a needed visible object before moving elsewhere. \\
clean & cle\_003 & Sink First for Cleaning & Target object is held and must be clean. & Go to the nearest sink or basin and clean the object before placement. \\
clean & cle\_005 & State Verification Before Drop & After cleaning and before final placement. & Verify the object is clean; clean again if the state is uncertain. \\
heat & hea\_003 & Open Then Heat & At the microwave with the target object held. & Open the microwave, place the object inside, and execute the heat action. \\
cool & coo\_005 & Direct Post-Cooling Delivery & Cooling action succeeds. & Deliver the cooled object directly to the destination without detours. \\
cool & coo\_002 & Confirm Object Match & After acquiring an object in a cooling task. & Verify that the held item matches the requested target type; otherwise drop it and resume search. \\
cool & coo\_004 & Enforce Cooling Before Placement & Holding the correct object before final placement. & Do not place the target object until a fridge or freezer cooling action has succeeded. \\
cool & \makecell[l]{dyn\_verify\\\_cooling\\\_completion} & Verify Cooling Completion & After placing an object inside a cooling appliance. & Confirm the object is cool before retrieval, then proceed directly to delivery. \\
\bottomrule
\end{tabular}
\end{adjustbox}
\end{table*}

\begin{table*}[t]
\centering
\caption{Representative SearchQA skills used by \methodname{}.}
\label{tab:searchqa_skill_bank}
\scriptsize
\begin{adjustbox}{max width=\textwidth}
\begin{tabular}{lllp{0.27\textwidth}p{0.42\textwidth}}
\toprule
Task & Skill ID & Skill Name & Trigger & Content \\
\midrule
General & gen\_001 & Decompose Then Search & Complex or multi-part questions. & Break the question into minimal sub-questions and search each before synthesis. \\
General & gen\_002 & Precision Query Crafting & Initial query formulation. & Use exact entity names and target attributes while avoiding filler words. \\
General & gen\_003 & Iterative Query Refinement & First result set lacks definitive evidence. & Add qualifiers, alternate names, dates, or context instead of repeating the same query. \\
General & gen\_004 & Source-Backed Assertions & Before committing to a final answer. & Answer only after locating supporting evidence. \\
General & gen\_005 & Cross-Check Multiple Sources & Facts may be outdated or disputed. & Validate names, dates, and numbers against multiple independent sources. \\
\midrule
direct\_retrieval & dir\_001 & Isolate Core Query & Start of direct retrieval. & Strip the question to the key entity and sought fact, then search that pair. \\
direct\_retrieval & dir\_002 & Refine When Empty & Initial search gives weak or no hits. & Reformulate with synonyms, alternate names, dates, or quoted phrases. \\
direct\_retrieval & dir\_003 & Anchor With Quotes & Distinctive phrases, lyrics, titles, or quotes. & Wrap unique phrases in quotation marks to retrieve exact-match sources. \\
direct\_retrieval & \makecell[l]{dyn\_direct\_retrieval\\\_search\_first\\\_answer\_second} & Search First, Answer Second & Factoid or definition queries. & Always run an external search before relying on memory. \\
multi\_hop\_reasoning & mul\_001 & Decompose Question First & Multi-hop questions linking multiple facts. & Split the question into explicit sub-questions before searching. \\
multi\_hop\_reasoning & mul\_003 & Collect-Then-Compare & Comparative multi-hop tasks. & Retrieve concrete values for all items before comparing. \\
entity\_attribute\_lookup & ent\_001 & Direct Attribute Query & Full, unambiguous entity name is given. & Include the full entity name and target attribute in the first search. \\
entity\_attribute\_lookup & ent\_003 & Two-Source Cross-Check & First plausible answer appears or attribute is uncertain. & Confirm the attribute in at least two independent sources. \\
compare & com\_001 & Decompose \& Isolate & Reading a comparison-type question. & Split the question into entities and the single attribute to compare. \\
compare & com\_003 & Normalize Before Comparing & After gathering each entity's attribute. & Convert values to a common comparable form before judging equality or order. \\
\bottomrule
\end{tabular}
\end{adjustbox}
\end{table*}



\end{document}